\newcommand{\cutsectionup}{\vspace*{-0.1in}}
\newcommand{\cutsectiondown}{\vspace*{-0.08in}}
\newcommand{\cutsubsectionup}{\vspace*{-0.07in}} 
\newcommand{\thickhline}{%
    \noalign {\ifnum 0=`}\fi \hrule height 1.0pt
    \futurelet \reserved@a \@xhline
}
\newcommand{\abs}{{\rm AVR}}
\newcommand{\R}{\mathbb{R}}
\newcommand{\eR}{\hat{R}_L}
\newcommand{\f}{\phi}
\newcommand{\SO}{\mathcal{S}}
\newcommand{\din}{d_\textrm{in}}
\newcommand{\Expect}{\mathbf{E}}
\newcommand{\F}{\mathcal{F}}
\newcommand{\relu}{{\rm ReLU}}
\newcommand{\cat}{{\rm CReLU}}
\newcommand{\pool}{{\rm pool}}
\newcommand{\vct}[1]{\mathbf{#1}}
\newcommand{\Tm}{\tilde{\mathcal{T}}}
\newcommand{\Sm}{\tilde{\mathcal{S}}}
\newcommand{\Ttm}{\tilde{\mathcal{T}}^*}
\newcommand{\nmu}{\bar{\mu}}
\newcommand{\argmin}{\arg\!\min}
\newcommand{\cnn}{f_{\mathrm{cnn}}}
\newcommand{\T}{\mathcal{T}}
\newtheorem{theorem}{Theorem}[section]
\newtheorem{lemma}[theorem]{Lemma}
\newtheorem{proposition}[theorem]{Proposition}
\newtheorem{corollary}[theorem]{Corollary}
\newtheorem{Definition}{Definition}[section]
\icmltitlerunning{Understanding and Improving Convolutional Neural Networks via Concatenated Rectified Linear Units}
\begin{document} 

\twocolumn[
\icmltitle{Understanding and Improving Convolutional Neural Networks via Concatenated Rectified Linear Units}

\icmlauthor{Wenling Shang$^{1}$}{wendy.shang@oculus.com}
\icmlauthor{Kihyuk Sohn$^{2}$}{ksohn@nec-labs.com}
\icmlauthor{Diogo Almeida$^{3}$}{diogo@enlitic.com}
\icmlauthor{Honglak Lee$^{4}$}{honglak@eecs.umich.edu}
\icmladdress{$^{1}$Oculus VR; $^{2}$NEC Laboratories America; $^{3}$Enlitic; $^{4}$University of Michigan, Ann Arbor}

\icmlkeywords{convolutional neural networks}

\vskip 0.3in
]

\begin{abstract} 
Recently, convolutional neural networks (CNNs) have been used as a powerful tool to solve many problems of machine learning and computer vision.
In this paper, we aim to provide insight on the property of convolutional neural networks, as well as a generic method to improve the performance of many CNN architectures.
Specifically, we first examine existing CNN models and observe an intriguing property that the filters in the lower layers form pairs (i.e., filters with opposite phase).
Inspired by our observation, we propose a novel, simple yet effective activation scheme called \emph{concatenated ReLU} ($\cat$) and theoretically analyze its reconstruction property in CNNs.
We integrate $\cat$ into several state-of-the-art CNN architectures and demonstrate improvement in their recognition performance on CIFAR-10/100 and ImageNet datasets with fewer trainable parameters.
Our results suggest that better understanding of the properties of CNNs can lead to significant performance improvement with a simple modification. 
%
%
%
%
%
\end{abstract} 

\section{Introduction}
\label{sec:intro}
In recent years, convolutional neural networks (CNNs) have achieved great success in many problems of machine learning and computer vision~\cite{krizhevsky2012imagenet,simonyan2014very,szegedy2015going,girshick2014rich}.
%
In addition, a wide range of techniques have been developed to enhance the performance or ease the training of CNNs~\cite{lin2013network,zeiler2013stochastic, maas2013rectifier, ioffe2015batch}.
%
%
Despite the great empirical success, fundamental understanding of CNNs is still lagging behind. 
%
Towards addressing this issue, this paper aims to provide insight on the intrinsic property of convolutional neural networks. 

%
%
%
\begin{figure}
\centering
\includegraphics[width=0.475\textwidth]{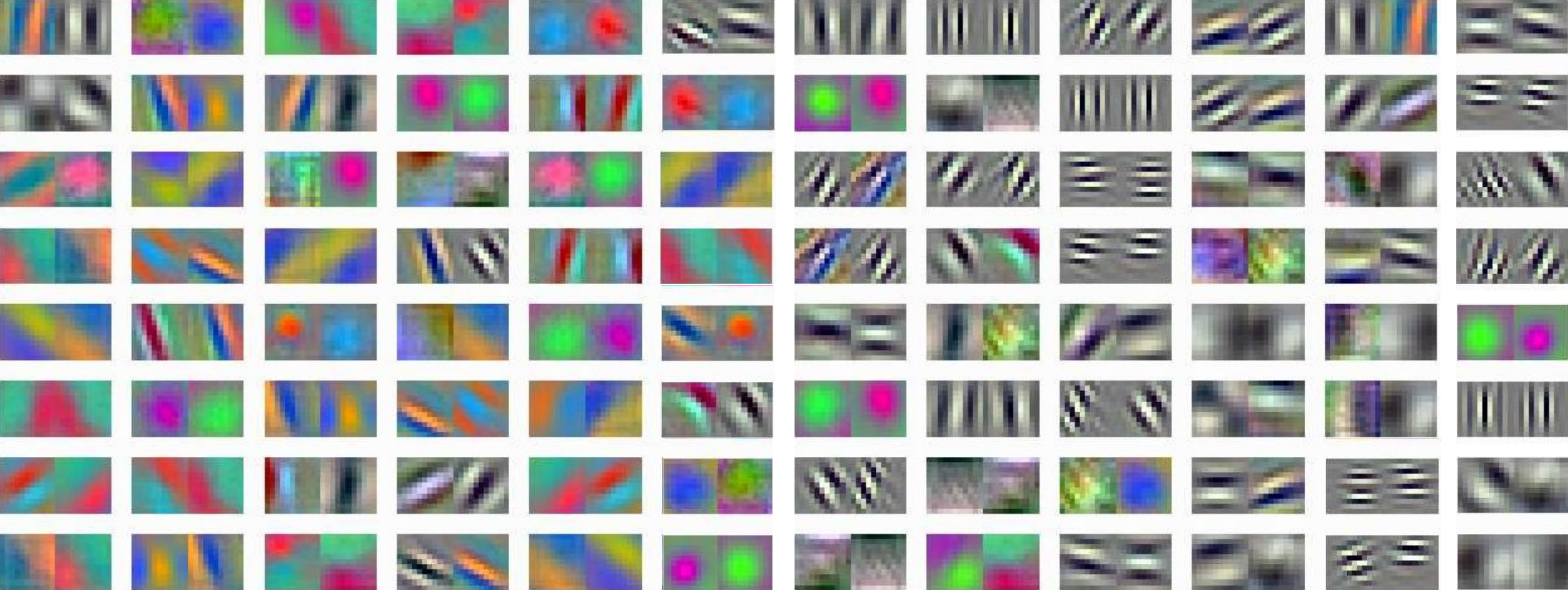}
\caption{\textbf{Visualization of conv1 filters from AlexNet}. Each filter and its pairing filter ($w_i$ and $\bar{w}_i$ next to each other) appear surprisingly opposite (in phase) to each other. See text for details.}\label{fig:CNN_pairs_Alex}
\vspace*{-0.15in}
\end{figure}

\begin{figure*}[htbp]
\centering
\subfigure[conv1]{\includegraphics[width=0.19\textwidth]{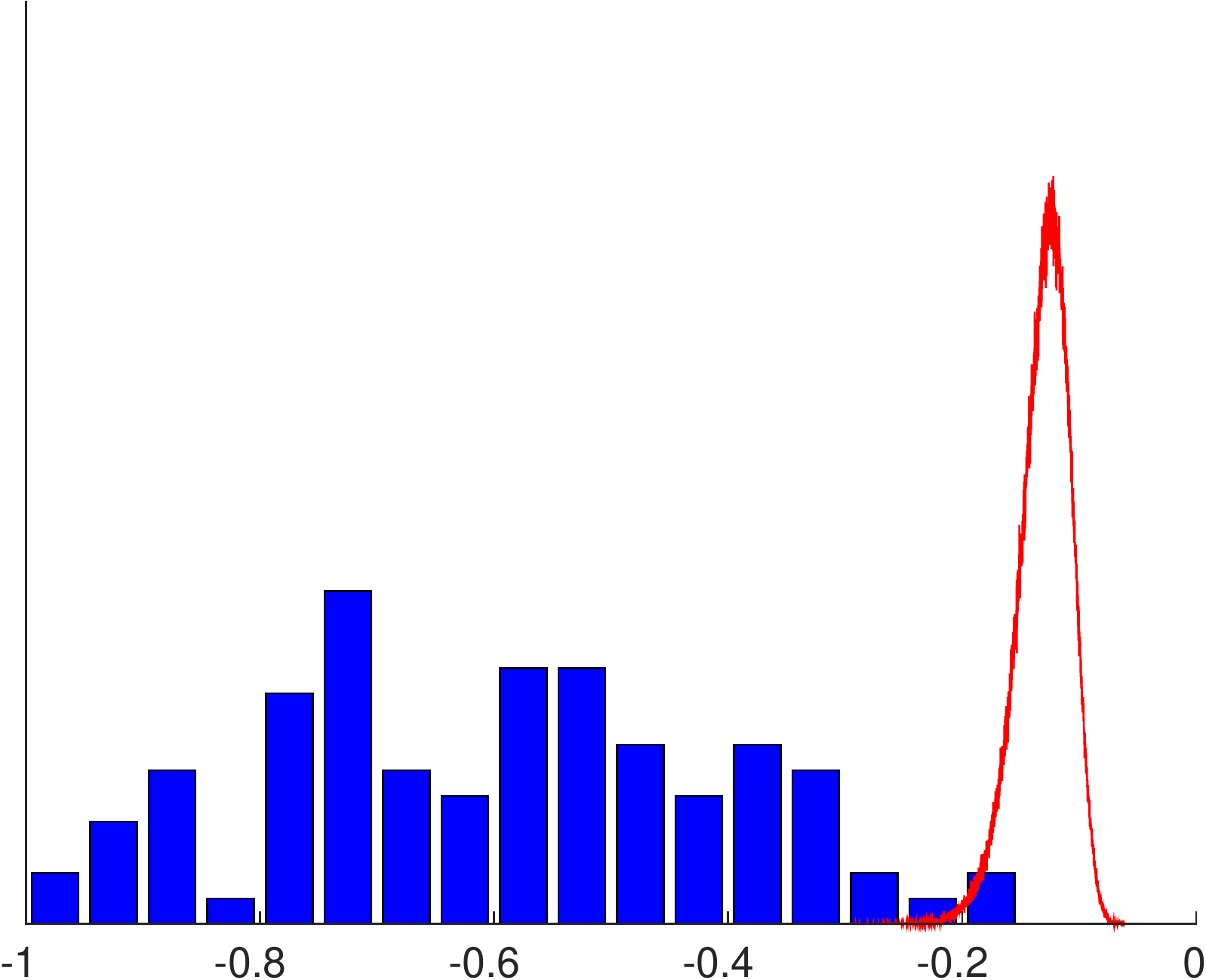}}
\subfigure[conv2]{\includegraphics[width=0.19\textwidth]{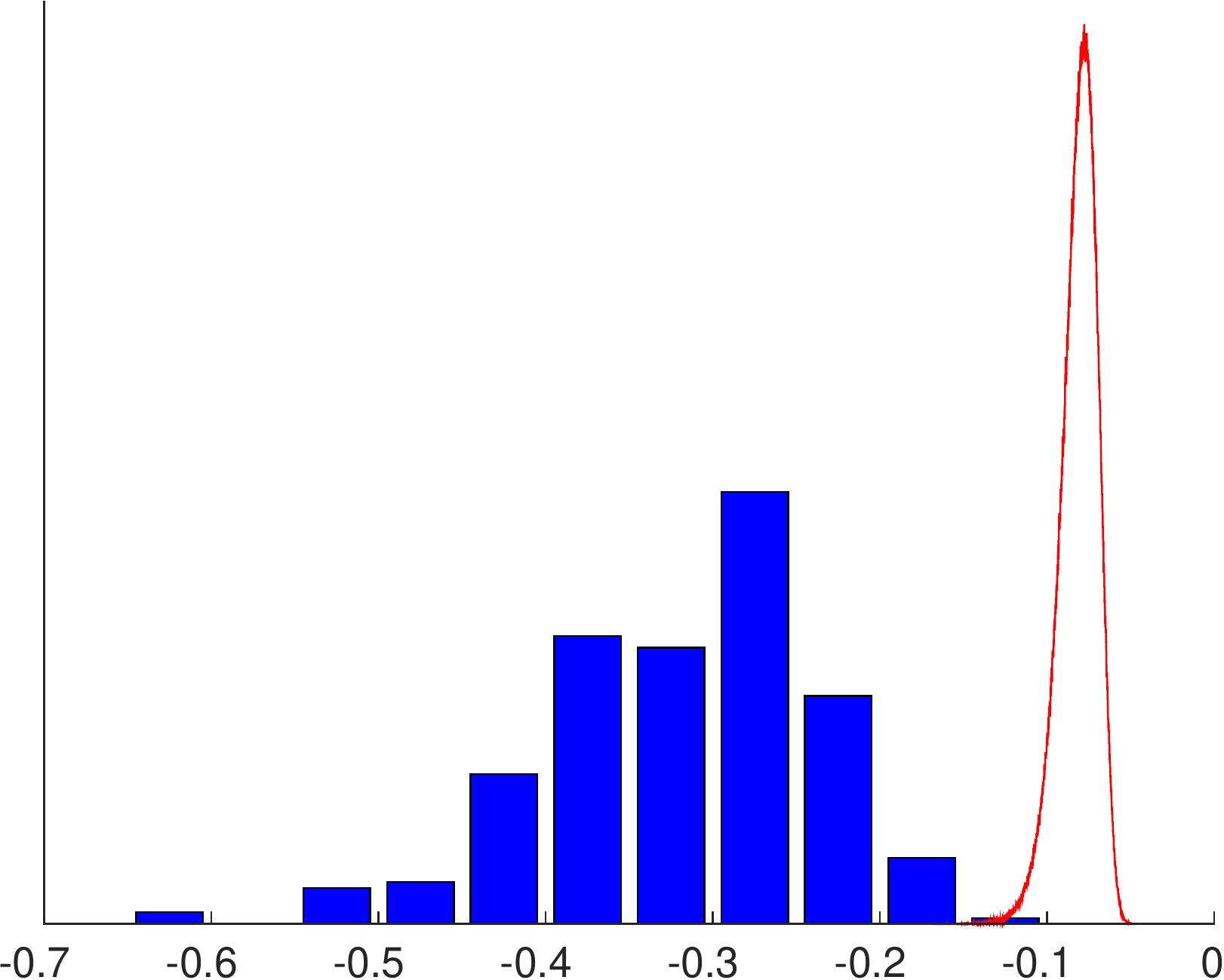}}
\subfigure[conv3]{\includegraphics[width=0.19\textwidth]{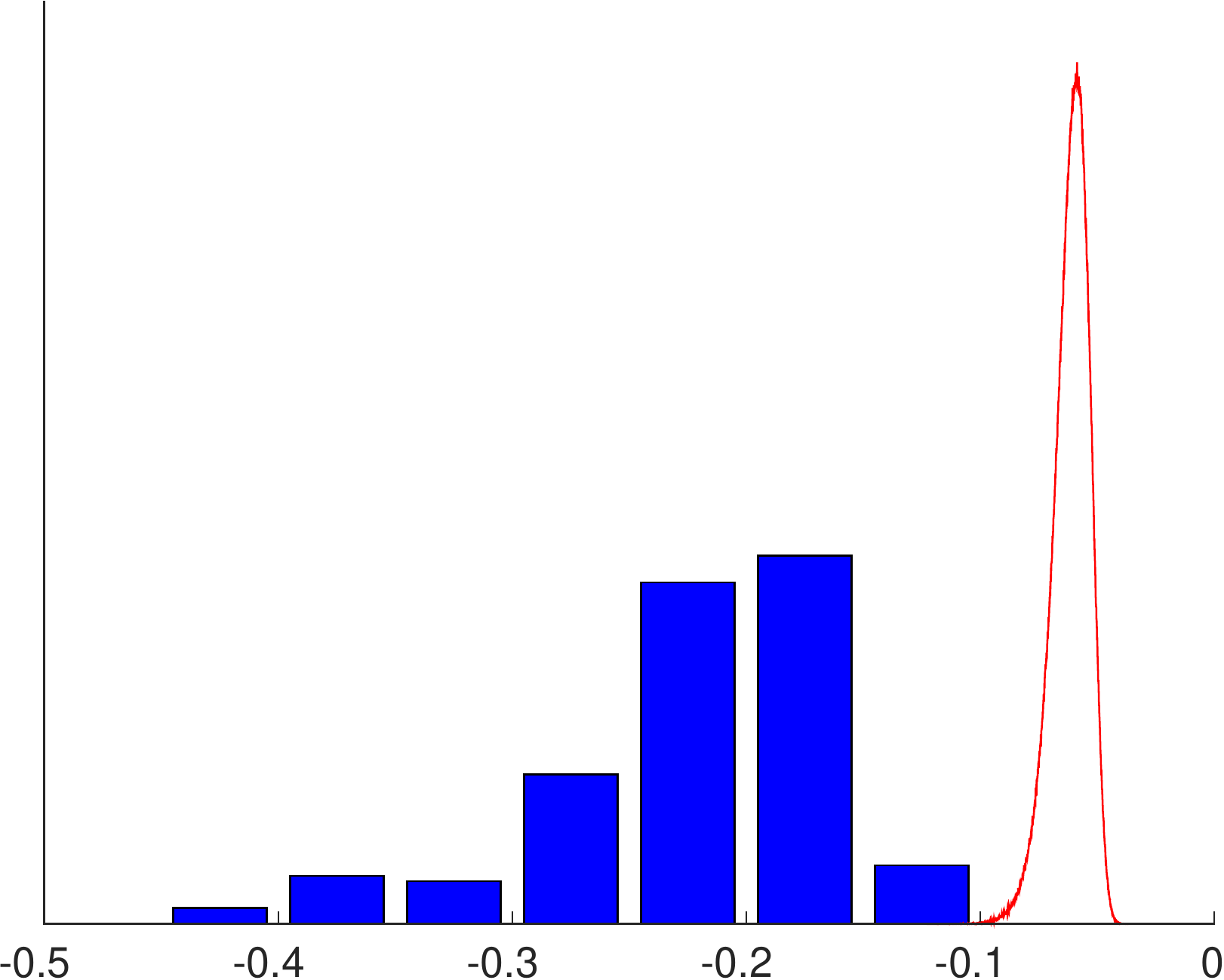}}
\subfigure[conv4]{\includegraphics[width=0.19\textwidth]{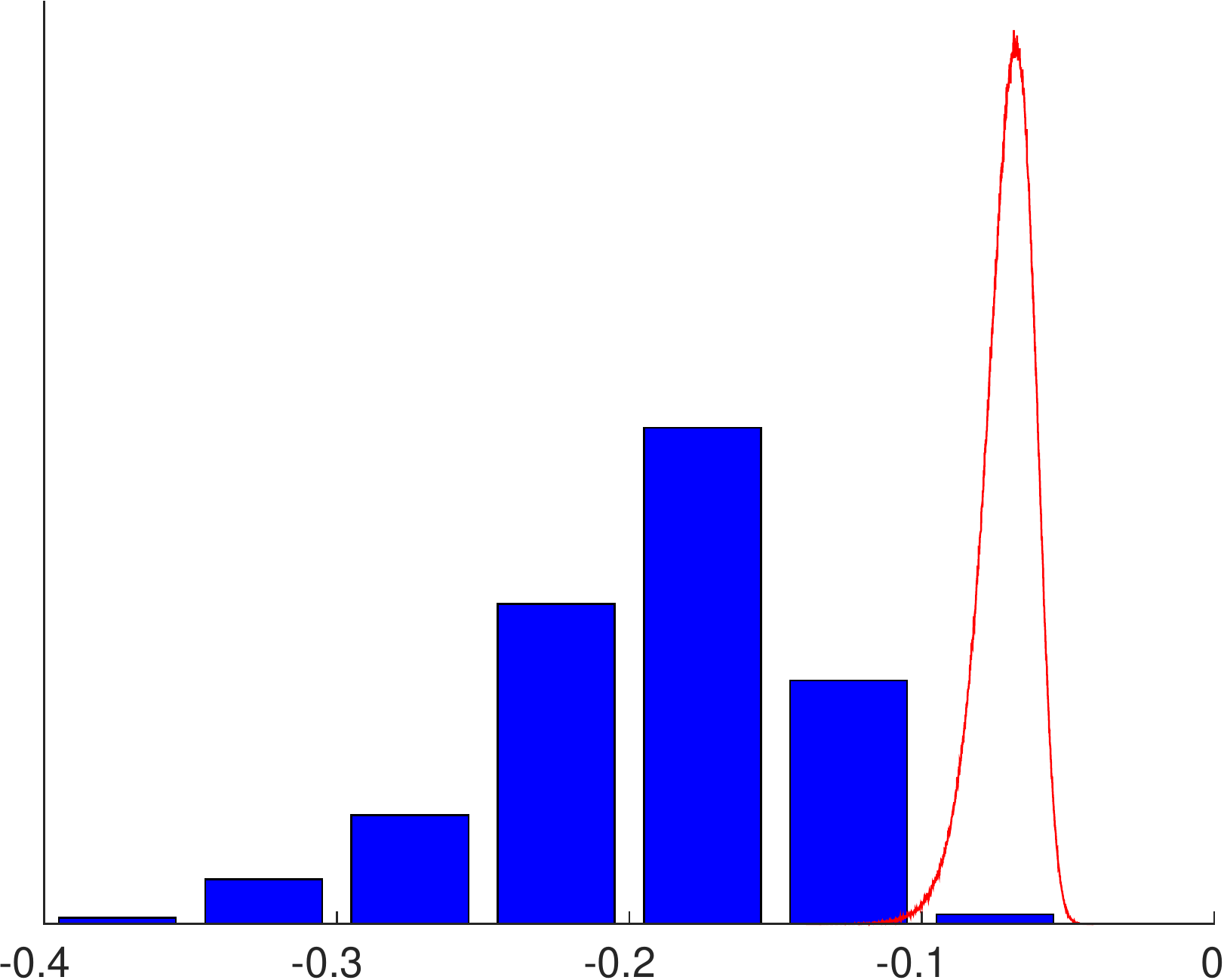}}
\subfigure[conv5]{\includegraphics[width=0.19\textwidth]{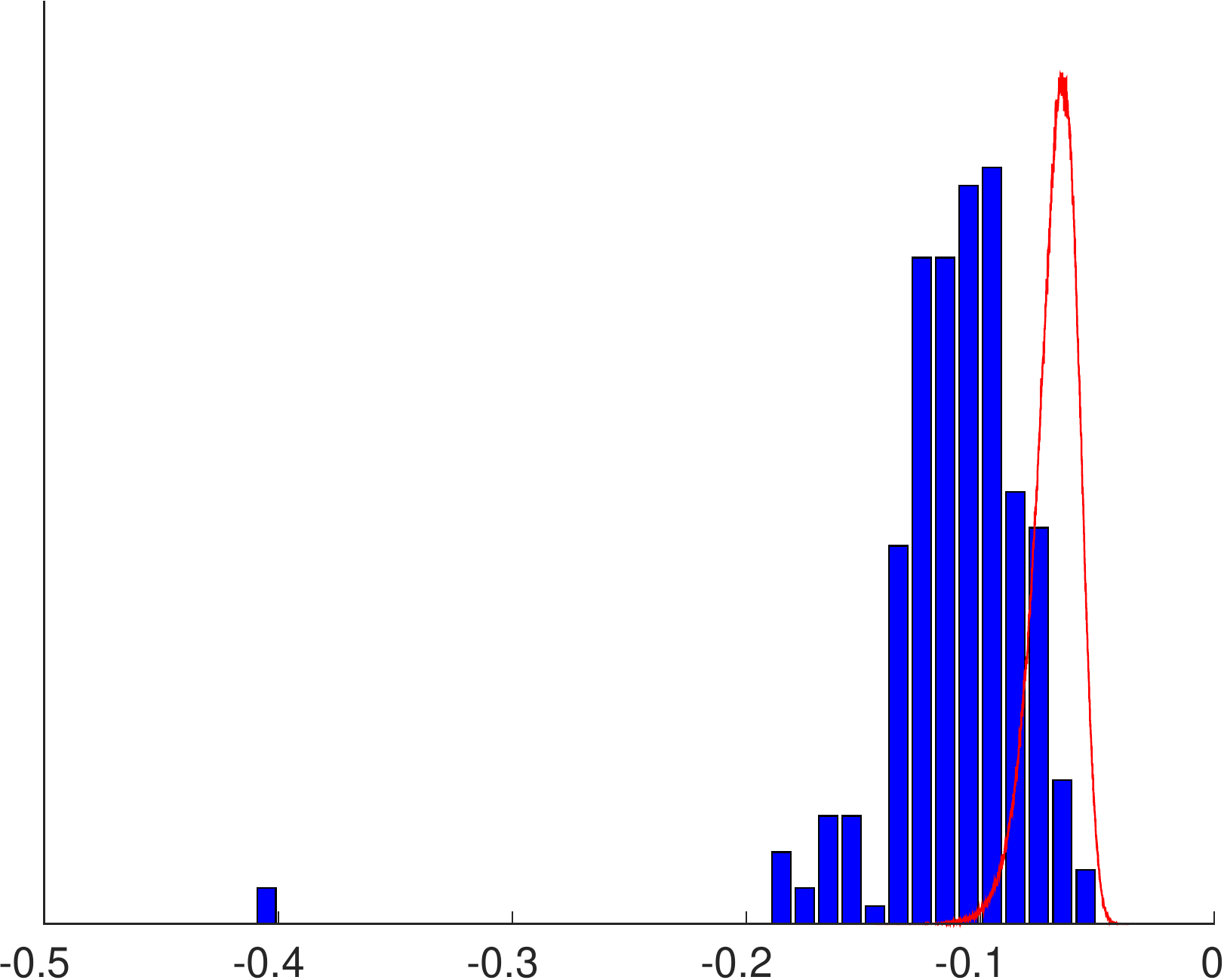}}
\vspace*{-0.15in}
\caption{\textbf{Histograms of $\mu^r$(red) and $\mu^w$(blue) for AlexNet}. 
Recall that for a set of unit length filters $\{\phi_i\}$, we define $\mu^{\phi}_i = \langle \phi_i, \bar{\phi}_i  \rangle$ where $\bar{\phi}_i$ is the pairing filter of $\phi_i$.
For conv1 layer, the distribution of $\mu^w$ (from the AlexNet filters) is negatively centered, which significantly differs from that of $\mu^r$ (from random filters), whose center is very close to zero. 
The center gradually shifts towards zero when going deeper into the network. 
}\label{fig:dist_Alex}
\vspace*{-0.15in}
\end{figure*}

%
To better comprehend the internal operations of CNNs, we investigate the well-known AlexNet~\citep{krizhevsky2012imagenet} and thereafter discover that the network learns highly negatively-correlated pairs of filters for the first few convolution layers.
Following our preliminary findings, we hypothesize that the lower convolution layers of AlexNet learn redundant filters to extract both positive and negative phase information of an input signal (Section~\ref{intuition:conj}).
%
Based on the premise of our conjecture, we propose a novel, simple yet effective activation scheme called \textbf{Concatenated Rectified Linear Unit} ($\cat$).
The proposed activation scheme preserves both positive and negative phase information while enforcing non-saturated non-linearity. 
The unique nature of $\cat$ allows a mathematical characterization of convolution layers in terms of reconstruction property, which is an important indicator of how expressive and generalizable the corresponding CNN features are (Section~\ref{sec:recon}).

In experiments, we evaluate the CNN models with $\cat$ and make a comparison to models with $\relu$ and Absolute Value Rectification Units ($\abs$)~\cite{jarrett2009best} on benchmark object recognition datasets, such as CIFAR-10/100 and ImageNet (Section~\ref{sec:experiments}). 
%
%
We demonstrate that simply replacing $\relu$ with $\cat$ for the lower convolution layers of an existing state-of-the-art CNN architecture yields a substantial improvement in classification performance. 
%
In addition, $\cat$ allows to attain notable parameter reduction without sacrificing classification performance when applied appropriately.

We analyze our experimental results from several viewpoints, such as regularization (Section~\ref{sec:regularization}) and invariant representation learning (Section~\ref{sec:invariant}). 
Retrospectively, we provide empirical evaluations on the reconstruction property of $\cat$ models; we also confirm that by integrating $\cat$, the original ``pair-grouping" phenomenon vanishes as expected (Section~\ref{sec:feature}). 
%
%
%
%
%
Overall, our results suggest that by better understanding the nature of CNNs, we are able to realize their higher potential with a simple modification of the architecture. 

\cutsectionup
\section{CRelu and Reconstruction Property}
\subsection{Conjecture on Convolution Layers}\label{intuition:conj}
In our initial exploration of classic CNNs trained on natural images such as AlexNet~\cite{krizhevsky2012imagenet}, we noted a curious property of the first convolution layer filters: \emph{these filters tend to form ``pairs''}. 
%
More precisely, assuming unit length vector for each filter $\phi_{i}$, we define a \emph{pairing filter} of $\phi_{i}$ in the following way:
$\bar{\phi}_i = \argmin_{\phi_j} \langle \phi_i,\phi_j \rangle.$
We also define their cosine similarity $\mu^{\phi}_i = \langle \phi_i, \bar{\phi}_i  \rangle$.
%
%

%
In Figure~\ref{fig:CNN_pairs_Alex}, we show each normalized filter of the first convolution layer from AlexNet with its pairing filter. 
Interestingly, they appear surprisingly opposite to each other, i.e., for each filter, there does exist another filter that is almost on the opposite phase.
Indeed, AlexNet employs the popular non-saturated activation function, \emph{Rectified Linear Unit} ($\relu$)~\cite{nair2010rectified}, which zeros out negative values and produces sparse activation. 
As a consequence, if both the positive phase and negative phase along a specific direction participate in representing the input space, the network then needs to learn two linearly dependent filters of both phases. 

To systematically study the pairing phenomenon in higher layers, we graph the histograms of $\nmu^w_i$'s for conv1-conv5 filters from AlexNet in Figure~\ref{fig:dist_Alex}.
%
For comparison, we generate random Gaussian filters $r_{i}$'s of unit norm\footnote{We sample each entry from standard normal distribution independently and normalize the vector to have unit $l^2$ norm.} and plot the histograms of $\nmu^r_i$'s together.
For conv1 layer, we observe that the distribution of $\nmu^w_i$ is negatively centered; by contrast, the mean of $\nmu^r_i$ is only slightly negative with a small standard deviation.
Then the center of $\nmu^w_i$ shifts towards zero gradually when going deeper into the network.
This implies that convolution filters of the lower layers tend to be paired up with one or a few others that represent their opposite phase, while the phenomenon gradually lessens as they go deeper.
%

%
Following these observations, we hypothesize that despite $\relu$ erasing negative linear responses, \emph{the first few convolution layers of a deep CNN manage to capture both negative and positive phase information through learning pairs or groups of negatively correlated filters}.
This conjecture implies that there exists a redundancy among the filters from the lower convolution layers. 

In fact, for a very special class of deep architecture, the invariant scattering convolutional network~\cite{bruna2013invariant}, it is well-known that its set of convolution filters, which are wavelets, is overcomplete in order to be able to fully recover the original input signals.
On the one hand, similar to $\relu$, each individual activation within the scattering network preserves partial information of the input.
On the other hand, different from $\relu$ but more similar to $\abs$, scattering network activation preserves the energy information, i.e., keeping the modulus of the responses but erasing the phase information; $\relu$ from a generic CNN, as a matter of fact, retains the phase information but eliminates the modulus information when the phase of a response is negative. 
%
In addition, while the wavelets for scattering networks are manually engineered, convolution filters from CNNs must be learned, which makes the rigorous theoretical analysis challenging.  
%
%
%

%
%
Now suppose we can leverage the pairing prior and design a method to explicitly allow both positive and negative activation, then we will be able to alleviate the redundancy among convolution filters caused by $\relu$ non-linearity and make more efficient use of the trainable parameters.
To this end, we propose a novel activation scheme, \emph{Concatenated Rectified Linear Units}, or $\cat$.
It simply makes an identical copy of the linear responses after convolution, negate them, concatenate both parts of activation, and then apply $\relu$ altogether. 
More precisely, we denote $\relu$ as $[\cdot]_+ \triangleq  \max(\cdot, 0)$, and define $\cat$ as follows:
\begin{Definition}\label{crelu}
$\cat$ activation, denoted by $\rho_c: \R \to \R^2$, is defined as follows: $\forall x\in \R, \rho_c(x) \triangleq ([x]_+, [-x]_+)$.  
\end{Definition}
\cutsectiondown
%
The rationale of our activation scheme is to allow a filter to be activated in both positive and negative direction while maintaining the same degree of non-saturated non-linearity.

A resembling method, namely soft-thresholding~\cite{coates}, has been applied as a separate feature encoding step to unsupervised dictionary learning to generate more separable features for linear SVM classifier.
%
%
Concurrent to our work, other research groups also conducted related studies independently and presented as MaxMin scheme~\cite{ICIP}, ON/OFF ReLU~\cite{kim2015convolutional}, or as Antirectifier~\cite{Keras}.\footnote{The antirectifier (\url{https://github.com/fchollet/keras/blob/master/examples/antirectifier.py}) has slightly different formulation to ours as it involves a few preprocessing steps such as mean subtraction and normalization before concatenated rectification.}
In comparison, we provide comprehensive experiments on large-scale datasets using deeper network architectures as well as qualitative analysis.


%
%
%
%
An alternative way to allow negative activation is to employ the broader class of non-saturated activation functions including Leaky $\relu$ and its variants~\cite{maas2013rectifier,xu2015empirical}. 
%
Leaky $\relu$ assigns a small slope to the negative part instead of completely dropping it.  
%
These activation functions share similar motivation with $\cat$ in the sense that they both tackle the two potential problems caused by the hard zero thresholding: (1) the weights of a filter will not be adjusted if it is never activated, and (2) truncating all negative information can potentially hamper the learning.
However, $\cat$ is based on an activation \emph{scheme} rather than a \emph{function}, which fundamentally differentiates itself from Leaky $\relu$ or other variants.
In our version, we apply $\relu$ after separating the negative and positive part to compose $\cat$, but it is not the only feasible non-linearity. 
%
For example, $\cat$ can be combined with other activation functions, such as Leaky $\relu$, to add more diversity to the architecture. 
%

Another natural analogy to draw is between $\cat$ and $\abs$, where the latter one only preserves the modulus information but discard the phase information, similar to the scattering network. 
$\abs$ has not been widely used recently for the CNN models due to its suboptimal empirical performance.
We confirm this common belief in the matter of large-scale image recognition task (Section~\ref{sec:experiments}) and conclude that modulus information alone does not suffice to produce state-of-the-art deep CNN features. 
\subsection{Reconstruction Property}\label{sec:recon}
A notable property of $\cat$ is its \emph{information preservation nature}: $\cat$ conserves both negative and positive linear responses after convolution.
A direct consequence of information preserving is the reconstruction power of the convolution layers equipped with $\cat$. 
%

Reconstruction property of a CNN implies that its features are representative of the input data.
%
This aspect of CNNs has gained interest recently: \citet{mahendran2014understanding} invert CNN features back to the input under simple natural image priors; \citet{zhao2015stacked} stack autoencoders with reconstruction objective to build better classifiers. 
%
%
\citet{bruna2013signal} theoretically investigate general conditions under which the max-pooling layer followed by $\relu$ is injective and measure stability of the inverting process by computing the Lipschitz lower bound.
However, their bounds are non-trivial only when the number of filters significantly outnumbers the input dimension, which is not realistic. 
%
%

In our case, it becomes more straightforward to analyze the reconstruction property since $\cat$ preserves all the information after convolution.
The rest of this section mathematically characterizes the reconstruction property of a single convolution layer followed by $\cat$ with or without max-pooling layer.
%
%

%

%
We first analyze the reconstruction property of convolution followed by $\cat$ without max-pooling.
This case is directly pertinent as deep networks replacing max-pooling with stride has become more prominent in recent studies~\cite{springenberg2014striving}.
The following proposition states that the part of an input signal spanned by the shifts of the filters is well preserved.
%
%
%
%
\begin{proposition}\label{recon1}
Let $x\in\mathbb{R}^{D}$ be an input vector\footnote{For clarity, we assume the input signals are vectors (1D) rather than images (2D); however, similar analysis can be done for 2D case.} and $W$ be the $D$-by-$K$ matrix whose columns vectors are composed of $w_i \in\mathbb{R}^l, i = 1, \ldots ,K$ convolution filters. Furthermore, let $x = x' +  (x - x')$, where $x' \in\mathrm{range}(W)$ and $x-x' \in \mathrm{ker}(W)$. Then we can reconstruct $x'$ with $\cnn(x)$, where $\cnn(x)\triangleq \cat\left(W^{T}x\right)$.
\end{proposition}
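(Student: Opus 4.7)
The plan is to recover $x'$ in two separate stages: first undo the $\cat$ nonlinearity to obtain the linear pre-activation $W^{T}x$, then invert the transpose convolution $W^{T}$ restricted to $\mathrm{range}(W)$.

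For the first stage, I would exploit the fact that $\cat$ is coordinatewise invertible. For every scalar $y$ the identity $y = [y]_{+} - [-y]_{+}$ holds, so subtracting the two halves of $\cnn(x) = \cat(W^{T}x) = \bigl([W^{T}x]_{+},\, [-W^{T}x]_{+}\bigr)$ recovers $W^{T}x$ exactly. This is precisely the information-preservation property emphasized at the start of Section~\ref{sec:recon}, and is what fundamentally distinguishes $\cat$ from plain $\relu$.

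For the second stage, I would invoke the standard orthogonality $\mathrm{ker}(W^{T}) = \mathrm{range}(W)^{\perp}$, which identifies the complementary summand appearing in the hypothesis with the orthogonal complement of $\mathrm{range}(W)$ inside $\mathbb{R}^{D}$. Since $x - x'$ lies in this kernel, $W^{T}(x - x') = 0$, and hence $W^{T}x = W^{T}x'$; the output of the first stage is therefore $W^{T}x'$. The restriction $W^{T}\big|_{\mathrm{range}(W)} : \mathrm{range}(W) \to \mathrm{range}(W^{T})$ has trivial kernel, because $\mathrm{range}(W) \cap \mathrm{ker}(W^{T}) = \mathrm{range}(W) \cap \mathrm{range}(W)^{\perp} = \{0\}$. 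It is therefore a linear isomorphism onto its image, and the Moore--Penrose pseudoinverse gives the closed-form reconstruction $x' = W\,(W^{T}W)^{+}\,W^{T}x$, computed entirely from $\cnn(x)$.

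There is no substantive obstacle here: once the $\cat$-inversion identity is in hand, the remainder is essentially the fundamental theorem of linear algebra. The only subtlety worth flagging is that $W$ is typically rank-deficient in the convolutional setting, since shifted copies of the same filter are highly linearly dependent; this is why one must invoke the pseudoinverse rather than an ordinary inverse, and it is also why the statement recovers only the component $x' \in \mathrm{range}(W)$ rather than $x$ itself. The orthogonal component $x - x'$ produces zero pre-activations under $W^{T}$ and is therefore invisible to any activation scheme applied downstream.
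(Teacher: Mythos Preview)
Your proposal is correct and follows essentially the same two-stage approach as the paper: first invert $\cat$ to recover the linear response $W^{T}x$, then apply the Moore--Penrose pseudoinverse of $W^{T}$ (your formula $W(W^{T}W)^{+}$ is precisely $(W^{T})^{+}$) to project onto $\mathrm{range}(W)$ and obtain $x'$. You supply a bit more justification than the paper does---the explicit inversion identity $y=[y]_{+}-[-y]_{+}$ and the argument that $W^{T}x=W^{T}x'$ via $\ker(W^{T})=\mathrm{range}(W)^{\perp}$---but the underlying argument is identical.
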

See Section~\ref{nonmax} in the supplementary materials for proof.
%

%
Next, we add max-pooling into the picture.
To reach a non-trivial bound, we need additional constraints on the input space.
Due to space limit, we carefully explain the constraints and the theoretical consequence in Section~\ref{maxpooling} of the supplementary materials. 
We will revisit this subject after the experiment section (Section~\ref{sec:feature}). 

\begin{table*}[t]
\caption{\textbf{Test set recognition error rates on CIFAR-10/100.} We compare the performance of $\relu$ models (baseline) and $\cat$ models with different model capacities: ``double" refers to the models that double the number of filters and ``half" refers to the models that halve the number of filters. The error rates are provided in multiple ways, such as ``Single", ``Average" (with standard error), or ``Vote", based on cross-validation methods. We also report the corresponding train error rates for the Single model. The number of model parameters are given in million. Please see the main text for more details about model evaluation.\label{tab:cifar_results}}
\vspace{0.05in}
\centering
\small
\begin{tabular}{c|c|c|c|c|c|c|c|c|c} 
\hline
\multirow{3}{*}{Model} & \multicolumn{4}{c|}{CIFAR-10} & \multicolumn{4}{c|}{CIFAR-100} & \multirow{3}{*}{params.}\\ \cline{2-9}
 & \multicolumn{2}{c|}{Single} & \multirow{2}{*}{Average} & \multirow{2}{*}{Vote} & \multicolumn{2}{c|}{Single} & \multirow{2}{*}{Average} & \multirow{2}{*}{Vote} &  \\ \cline{2-3}\cline{6-7}
 & train & test &  &  & train & test &  &  &  \\ \hline
Baseline & 1.09 & 9.17 & 10.20{\scriptsize$\pm 0.09$} & 7.55 & 13.68 & 36.30 & 38.52{\scriptsize$\pm 0.12$} & 31.26 & $1.4$M \\
{\small$+$} (double) & 0.47 & 8.65 & 9.87{\scriptsize$\pm 0.09$} & 7.28 & 6.03 & 34.77 & 36.73{\scriptsize$\pm 0.15$} & 28.34 & $5.6$M \\ \hline
$\abs$ & 4.10 & {\bf 8.32} & 10.26{\scriptsize$\pm 0.10$} & 7.76 & 19.35 & 35.00 & 37.24{\scriptsize$\pm 0.20$} & 29.77 & $1.4$M \\ \hline
$\cat$ & 4.23 & 8.43 & \bf{9.39}{\scriptsize$\pm 0.11$} & \bf{7.09} & 14.25 & {\bf 31.48} & \bf{33.76}{\scriptsize$\pm 0.12$} & {\bf 27.60} & $2.8$M \\ 
{\small$+$} (half) & 4.73 & {\bf 8.37} & \bf{9.44}{\scriptsize$\pm 0.09$} & {\bf 7.09} & 21.01 & 33.68 & 36.20{\scriptsize$\pm 0.18$} & 29.93 & $0.7$M \\ 
\hline
\end{tabular}
\vspace{-0.1in}
\end{table*}

\cutsectionup
\section{Benchmark Results}
\label{sec:experiments}
%
We evaluate the effectiveness of the $\cat$ activation scheme on three benchmark datasets: CIFAR-10, CIFAR-100~\cite{krizhevsky2009learning} and ImageNet~\cite{deng2009imagenet}.
To directly assess the impact of $\cat$, we employ existing CNN architectures with $\relu$ that have already shown a good recognition baseline and demonstrate improved performance on top by replacing $\relu$ into $\cat$.
Note that the models with $\cat$ activation don't need significant hyperparameter tuning from the baseline $\relu$ model, and in most of our experiments, we only tune dropout rate while other hyperparameters (e.g., learning rate, mini-batch size) remain the same.
We also replace $\relu$ with $\abs$ for comparison with $\cat$.
The details of network architecture are in Section~\ref{sec:model} of the supplementary materials.

\cutsubsectionup
\subsection{CIFAR-10 and CIFAR-100}
\label{sec:exp-cifar}
The CIFAR-10 and 100 datasets~\cite{krizhevsky2009learning} each consist of $50,000$ training and $10,000$ testing examples of $32\times 32$ images evenly drawn from 10 and 100 classes, respectively.
We subtract the mean and divide by the standard deviation for preprocessing and use random horizontal flip for data augmentation.
%
 
%
We use the ConvPool-CNN-C model~\citep{springenberg2014striving} as our baseline model, which is composed of convolution and pooling followed by $\relu$ without fully-connected layers.
This baseline model serves our purpose well since it has clearly outlined network architecture only with convolution, pooling, and $\relu$. 
It has also shown competitive recognition performance using a fairly small number of model parameters. 
%
%

First, we integrate $\cat$ into the baseline model by simply replacing $\relu$ while keeping the number of convolution filters the same.
This doubles the number of output channels at each convolution layer and the total number of model parameters is doubled.
To see whether the performance gain comes from the increased model capacity, we conduct additional experiments with the baseline model while doubling the number of filters and the $\cat$ model while halving the number of filters.
We also evaluate the performance of the $\abs$ model while keeping the number of convolution filters the same as the baseline model.

Since the datasets don't provide pre-defined validation set, we conduct two different cross-validation schemes:
\begin{enumerate}[itemsep=5pt,nolistsep]
\item{``Single": we hold out a subset of training set for initial training and retrain the network from scratch using the whole training set until we reach at the same loss on a hold out set~\cite{goodfellow2013maxout}. For this case, we also report the corresponding train error rates.}
\item{$10$-folds: we divide training set into $10$ folds and do validation on each of $10$ folds while training the networks on the rest of $9$ folds. The mean error rate of single network (``Average") and the error rate with model averaging of $10$ networks (``Vote") are reported.}
\end{enumerate}
The recognition results are summarized in Table~\ref{tab:cifar_results}.
On CIFAR-10, we observe significant improvement with the $\cat$ activation over $\relu$.
Especially, $\cat$ models consistently improve over $\relu$ models with the same number of neurons (or activations) while reducing the number of model parameters by half (e.g., $\cat$ + half model and the baseline model have the same number of neurons while the number of model parameters are $0.7$M and $1.4$M, respectively).
On CIFAR-100, the models with larger capacity generally improve the performance for both activation schemes. 
Nevertheless, we still find a clear benefit of using $\cat$ activation that shows significant performance gain when it is compared to the model with the same number of neurons, i.e., half the number of model parameters.
One possible explanation for the benefit of using $\cat$ is its regularization effect, as can be confirmed in Table~\ref{tab:cifar_results} that the $\cat$ models showed significantly lower gap between train and test set error rates than those of the baseline $\relu$ models.

To our slight surprise, $\abs$ outperforms the baseline $\relu$ model on CIFAR-100 with respect to all evaluation metrics and on CIFAR-10 with respect to single-model evaluation. 
%
It also reaches promising single-model recognition accuracy compared to $\cat$ on CIFAR-10; however, when averaging or voting across 10-folds validation models, $\abs$ becomes clearly inferior to $\cat$.  
\paragraph{Experiments on Deeper Networks.} 
We conduct experiments with very deep CNN that has a similar network architecture to the VGG network~\cite{simonyan2014very}.
Specifically, we follow the model architecture and training procedure in~\citet{Zblog}.
%
%
%
Besides the convolution and pooling layers, this network contains batch normalization~\cite{ioffe2015batch} and fully connected layers. 
Due to the sophistication of the network composition which may introduce complicated interaction with $\cat$, we only integrate $\cat$ into the first few layers.
Similarly, we subtract the mean and divide by the standard deviation for preprocessing and use horizontal flip and random shifts for data augmentation. 

In this experiment\footnote{We attempted to replace $\relu$ with $\abs$ on various layers but we observed significant performance drop with $\abs$ non-linearity when used for deeper networks.}, we gradually replace $\relu$ after the first, third, and the fifth convolution layers\footnote{Integrating $\cat$ into the second or fourth layer before max-pooling layers did not improve the performance.} with $\cat$ while halving the number of filters, resulting in a reduced number of model parameters. 
%
%
We report the test set error rates using the same cross-validation schemes as in the previous experiments. 
As shown in Table~\ref{tab:VGGresult}, there is substantial performance gain in both datasets by replacing $\relu$ with $\cat$.
Overall, the proposed $\cat$ activation improves the performance of the state-of-the-art VGG network significantly, achieving highly competitive error rates to other state-of-the-art methods, as summarized in Table~\ref{tab:cifar_compare}.

\begin{table}[t]
\caption{\textbf{Test set recognition error rates on CIFAR-10/100 using deeper networks.} We gradually apply $\cat$ to replace $\relu$ after conv1, conv3, and conv5 layers of the baseline VGG network while halving the number of convolution filters.\label{tab:VGGresult}}
\vspace{0.05in}
\centering
\small
\begin{tabular}{c|c|c|c} 
\hline
\multicolumn{4}{c}{CIFAR-10}\\ \hline
Model & Single & Average & Vote  \\ \hline
VGG & 6.35 &    6.90{\scriptsize$\pm 0.03$} & 5.43  \\ \hline
(conv1) & 6.18  &    \bf{6.45}{\scriptsize$\pm 0.05$} & 5.22  \\ \hline
(conv1,3) &  \bf{5.94} &    \bf{6.45}{\scriptsize$\pm 0.02$} & \bf{5.09} \\  \hline
(conv1,3,5) & 6.06 &    \bf{6.45}{\scriptsize$\pm 0.07$} & 5.16 \\ \hline\hline
\multicolumn{4}{c}{CIFAR-100}\\ \hline
Model & Single & Average & Vote \\ \hline
VGG & 28.99 &   30.27{\scriptsize$\pm 0.09$} & 26.85  \\ \hline
(conv1) & 27.29 &   28.43{\scriptsize$\pm 0.11$} & 24.67 \\ \hline
(conv1,3) & 26.52 &  27.79{\scriptsize$\pm 0.08$} & 23.93  \\ \hline
(conv1,3,5) & \bf{26.16}  &  \bf{27.67}{\scriptsize$\pm 0.07$} & \bf{23.66}  \\ \hline
\end{tabular}
\vspace{-0.1in}
\end{table}

\cutsubsectionup
\subsection{ImageNet}
\label{sec:exp-imagenet}
To assess the impact of $\cat$ on large scale dataset, we perform experiments on ImageNet dataset~\cite{deng2009imagenet}\footnote{We used a version of ImageNet dataset for ILSVRC 2012.}, which contains about $1.3$M images for training and $50,000$ for validation from $1,000$ object categories.
%
For preprocessing, we subtract the mean and divide by the standard deviation for each input channel, and follow the data augmentation as described in~\cite{krizhevsky2012imagenet}.
%
%

%
We take the All-CNN-B model~\citep{springenberg2014striving} as our baseline model. 
The network architecture of All-CNN-B is similar to that of AlexNet~\cite{krizhevsky2012imagenet}, where the max-pooling layer is replaced by convolution with the same kernel size and stride, the fully connected layer is replaced by $1\times1$ convolution layers followed by average pooling, and the local response normalization layers are discarded.
In sum, the layers other than convolution layers are replaced or discarded and finally the network consists of convolution layers only. 
%
We choose this model since it reduces the potential complication introduced by $\cat$ interacting with other types of layers, such as batch normalization or fully connected layers.
%
%

\begin{table}[t]
\caption{\textbf{Comparison to other methods on CIFAR-10/100.}\label{tab:cifar_compare}} 
\vspace{0.05in}
\centering
\small
\begin{tabular}{c|c|c}
\hline
Model & CIFAR-10 & CIFAR-100  \\ \hline
\citep{rippel2015spectral} & 8.60 &  31.60   \\ \hline
\citep{snoek2015scalable} & 6.37 & 27.40   \\ \hline 
\citep{liang2015recurrent} & 7.09 & 31.75   \\ \hline 
\citep{lee2015generalizing} & 6.05  & 32.37  \\ \hline 
\citep{srivastava2015training} & 7.60 & 32.24   \\ \thickhline
VGG & 5.43 & 26.85 \\ \hline
VGG + $\cat$ & \textbf{5.09}  & \textbf{23.66}   \\ \hline 
\end{tabular}
\vspace{-0.05in}
\end{table}

We gradually integrate more convolution layers with $\cat$ (e.g., conv1--4, conv1--7, conv1--9), while keeping the same number of filters. 
These models contain more parameters than the baseline model.
%
We also evaluate two models where one replaces all $\relu$ layers into $\cat$ and the other conv1,conv4 and conv7 only, where both models reduce the number of convolution layers before $\cat$ by half.
Hence, these models contain fewer parameters than the baseline model.
For comparison, $\abs$ models are also constructed by gradually replacing $\relu$ in the same manner as the $\cat$ experiments (conv1--4, conv1--7, conv1--9). 
%
%
The network architectures and the training details are in Section~\ref{sec:model} and Section~\ref{sec:detail-training} of the supplementary materials. 
\begin{figure*}[t]
\centering
\subfigure[conv1]{\includegraphics[width=0.22\textwidth]{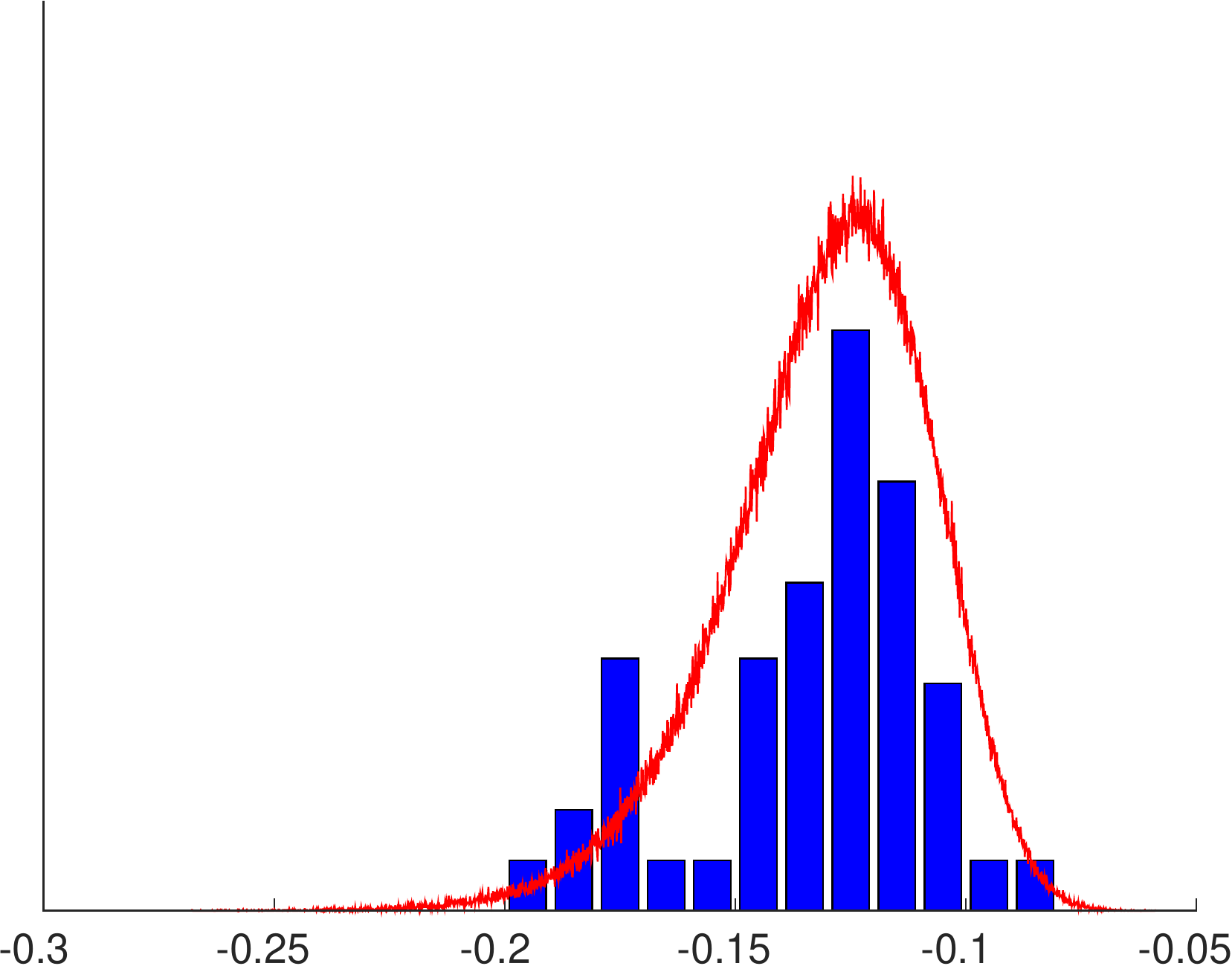}}\hspace{0.07in}
\subfigure[conv2]{\includegraphics[width=0.22\textwidth]{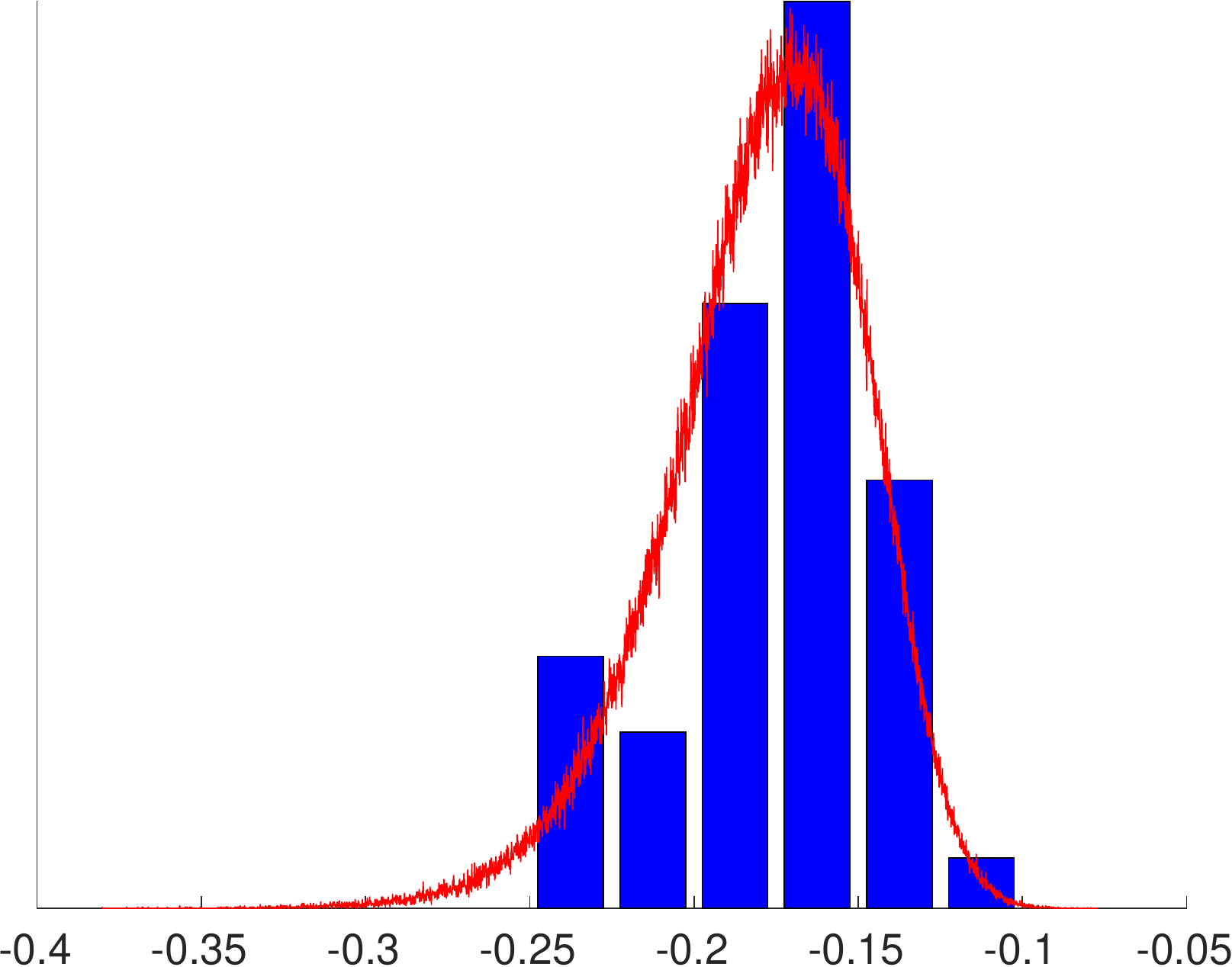}}\hspace{0.07in}
\subfigure[conv3]{\includegraphics[width=0.22\textwidth]{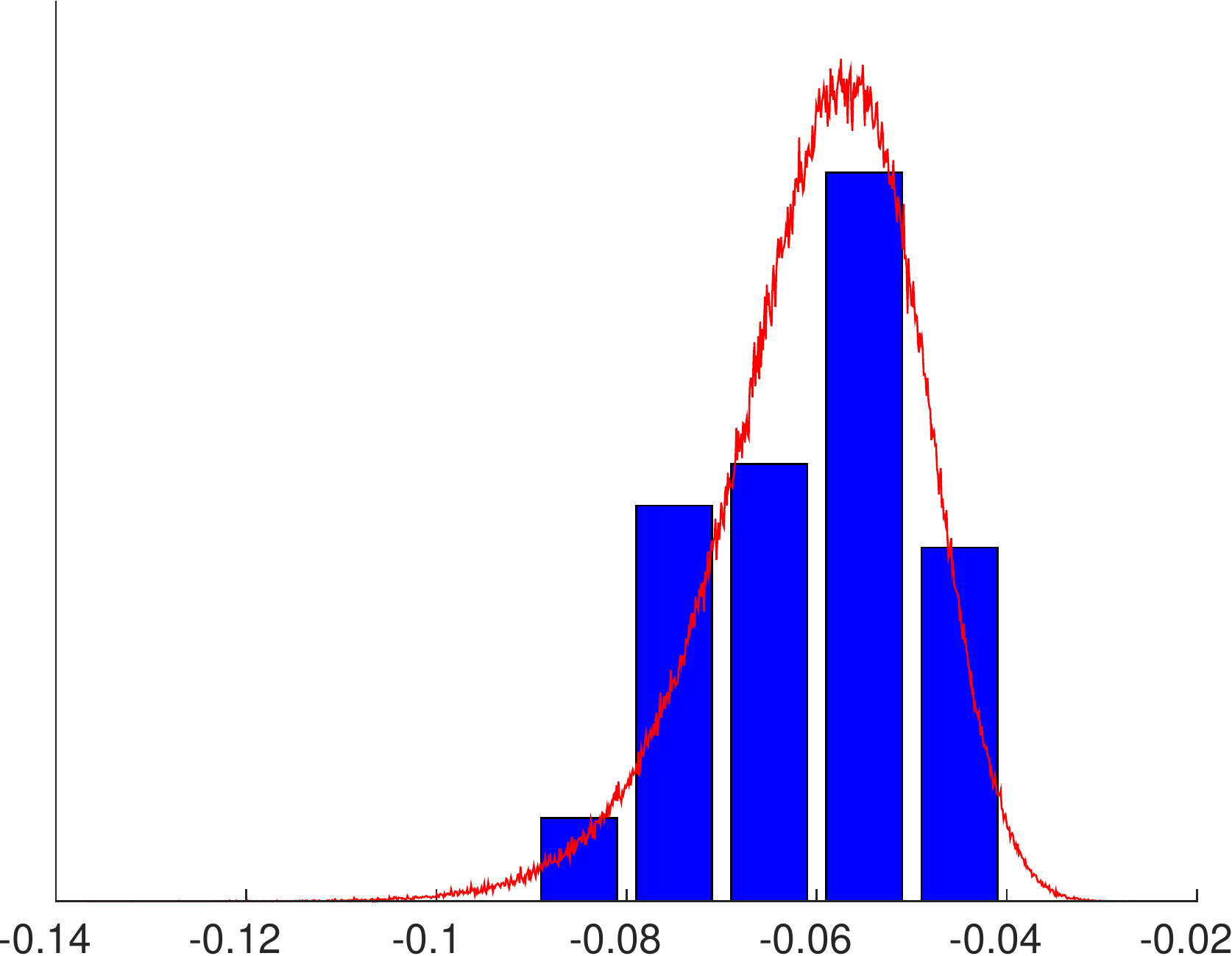}}\hspace{0.07in}
\subfigure[conv4]{\includegraphics[width=0.22\textwidth]{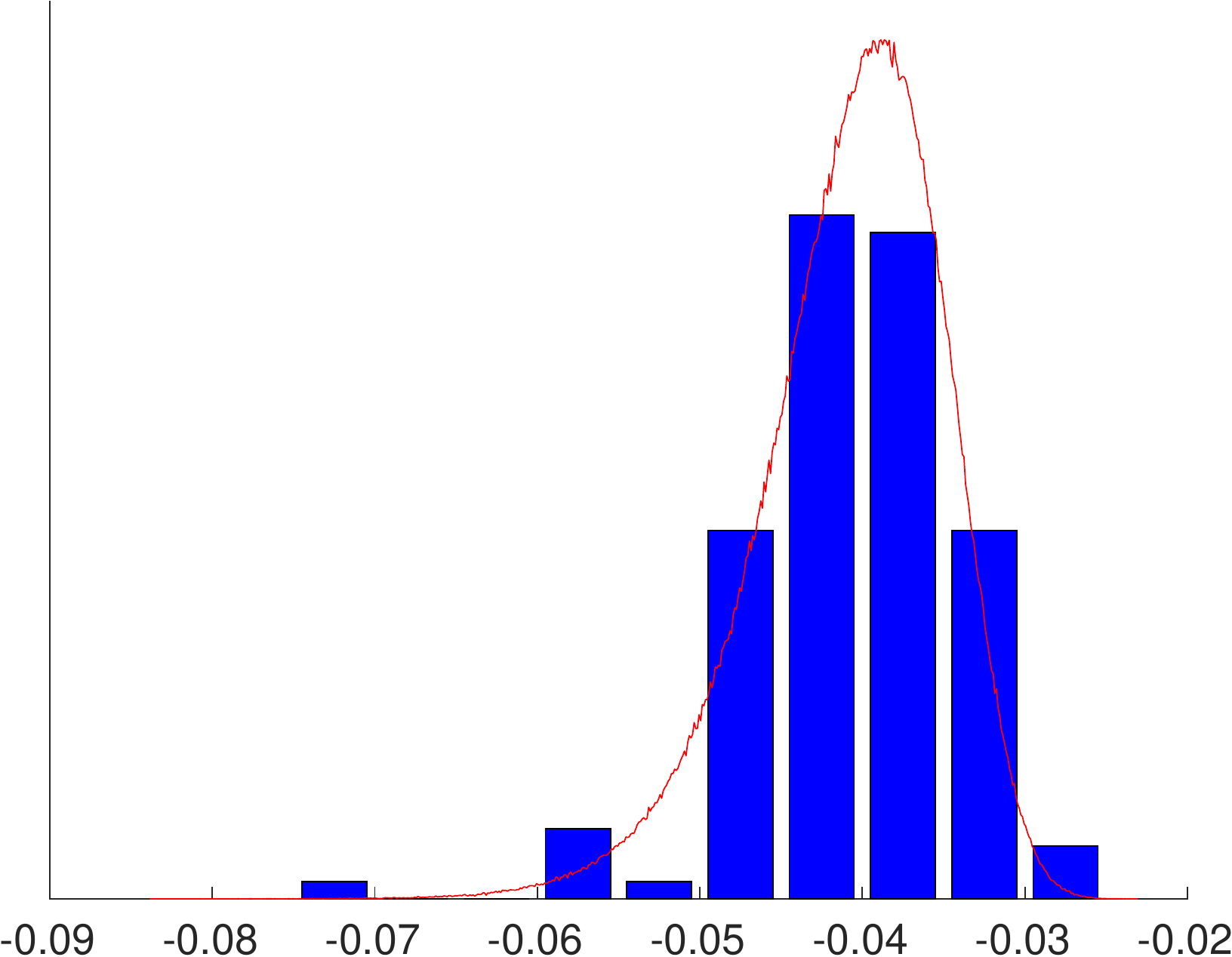}}
\vspace{-0.15in}
\caption{\textbf{Histograms of $\mu^r$(red) and $\mu^w$(blue) for $\cat$ model on ImageNet.} The two distributions align with each other for all conv1-conv4 layers--as we expected, the pairing phenomenon is not present any more after applying the $\cat$ activation scheme.
}\label{fig:dist_allconc3}
\vspace{-0.25in}
\end{figure*}

The results are provided in Table~\ref{tab:AllConvResult}. 
We report the top-1 and top-5 error rates with center crop only and by averaging scores over $10$ patches from the center crop and four corners and with horizontal flip~\cite{krizhevsky2012imagenet}.
Interestingly, integrating $\cat$ to conv1-4 achieves the best results, whereas going deeper with higher model capacity does not further benefit the classification performance.
In fact, this parallels with our initial observation on AlexNet (Figure~\ref{fig:dist_Alex} in Section~\ref{intuition:conj})---there exists less ``pairing'' in the deeper convolution layers and thus there is not much gain by decomposing the phase in the deeper layers. 
$\abs$ networks exhibit the same trend but do not noticeably improve upon the baseline performance, which implies that $\abs$ is not the most suitable candidate for large-scale deep representation learning. 
Another interesting observation, which we will discuss further in Section~\ref{sec:invariant}, is that the model integrating $\cat$ into conv1, conv4 and conv7 layers also achieve highly competitive recognition results with even fewer parameters than the baseline model. 
%
%
In sum, we believe that such a significant improvement over the baseline model by simply modifying the activation scheme is a pleasantly surprising result.\footnote{We note that \citet{springenberg2014striving} reported slightly better result ($41.2\%$ top-1 error rate with center crop only) than our replication result, but still the improvement is significant.}
We also compare our best models with AlexNet and other variants in Table~\ref{tab:ImageNetResult}.
Even though reducing the number of parameters is not our primary goal, it is worth noting that our model with only $4.6$M parameters ($\cat$ + all) outperforms FastFood-32-AD (FriedNet)~\cite{yang2014deep} and Pruned AlexNet (PrunedNet)~\cite{han2015learning}, whose designs directly aim at parameter reduction. 
Therefore, besides the performance boost, another significance of $\cat$ activation scheme is in designing more parameter-efficient deep neural networks.

\begin{table}[t]
\caption{\textbf{Validation error rates on ImageNet.} We compare the performance of baseline model with the proposed $\cat$ models at different levels of activation scheme replacement. Error rates with $^\dagger$ are obtained by averaging scores from $10$ patches.\label{tab:AllConvResult}}
\vspace{0.05in}
\centering
\small
\begin{tabular}{c|c|c|c|c} 
\hline
Model & top-1 & top-5 & top-1$^\dagger$ & top-5$^\dagger$ \\ \hline
Baseline & 41.81 & 19.74  & 38.03  & 17.17 \\ \hline
$\abs$ (conv1--4) & 41.12 & 19.25 & 37.32   & 16.49 \\ \hline  
$\abs$ (conv1--7) &  42.36 & 20.05 & 38.21 & 17.42 \\ \hline
$\abs$ (conv1--9) & 43.33 & 21.05 & 39.70  & 18.39 \\ \hline
$\cat$ (conv1,4,7) & 40.45 &  18.58 & \textbf{35.70}   & \textbf{15.32} \\ \hline
$\cat$ (conv1--4) & \textbf{39.82} & \textbf{18.28} & 36.20   & 15.72 \\ \hline  
$\cat$ (conv1--7) &  39.97 & 18.33 & 36.53 & 16.01 \\ \hline
$\cat$ (conv1--9) & 40.15 & 18.58 & 36.50  & 16.14 \\ \hline
$\cat$ (all) & 40.93 & 19.39 & 37.28 & 16.72 \\ \hline
\end{tabular}
\vspace{-0.05in}
\end{table}

\begin{table}[t]
\caption{\textbf{Comparison to other methods on ImageNet.} We compare with AlexNet and other variants, such as  FastFood-32-AD (FriedNet)~\cite{yang2014deep} and pruned AlexNet (PrunedNet)~\cite{han2015learning}, which are modifications of AlexNet aiming at reducing the number of parameters, as well as All-CNN-B, the baseline model~\cite{springenberg2014striving}. Error rates with $^\dagger$ are obtained by averaging scores from $10$ patches.\label{tab:ImageNetResult}}
\vspace{0.05in}
\centering
\small
\begin{tabular}{c|c|c|c|c|c} 
\hline
Model & top-1 & top-5 & top-1$^\dagger$ & top-5$^\dagger$ & params. \\ \hline
AlexNet & 42.6 & 19.6   & 40.7 & 18.2 & $61$M \\ \hline
FriedNet & 41.93 & -- & -- & -- & $32.8$M \\ \hline
PrunedNet & 42.77 & 19.67 & -- & -- & $6.7$M \\ \hline
AllConvB & 41.81 & 19.74  & 38.03  & 17.17& $9.4$M \\ 
\thickhline
$\cat$ (all) & 40.93 & 19.39 & 37.28 & 16.72 & $\mathbf{4.7}$M \\ \hline
(conv1,4,7) & 40.45 &  18.58 & \textbf{35.70}   & \textbf{15.32} & 8.6 M \\ \hline
(conv1--4) & \textbf{39.82} & \textbf{18.28} & 36.20   & 15.72 & $10.1$M  \\ \hline
\end{tabular}
\vspace{-0.05in}
%
%
%
\end{table}

\cutsectionup
\section{Discussion}
\label{sec:discussion}
%
In this section, we discuss qualitative properties of $\cat$ activation scheme in several viewpoints, such as regularization of the network and learning invariant representation.
%
\begin{figure*}[htbp]
\centering
\subfigure[CIFAR-10]{\label{fig:cifar100_traintest}\includegraphics[height=1.5in]{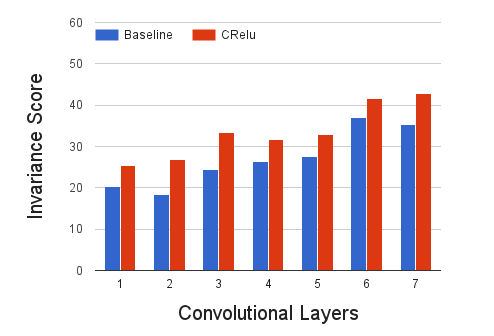}}
\subfigure[CIFAR-100]{\label{fig:cifar100_test}\includegraphics[height=1.5in]{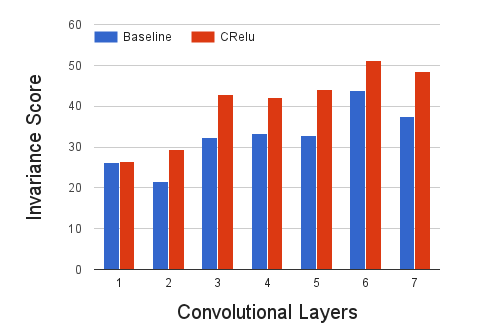}}
\subfigure[ImageNet]{\label{fig:cifar100_test}\includegraphics[height=1.5in]{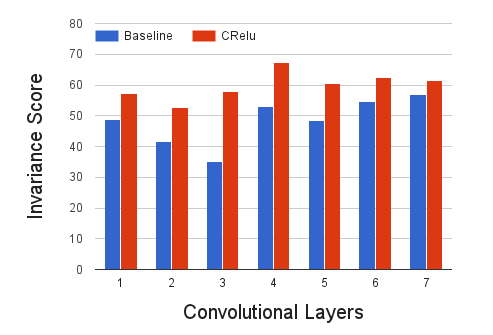}}
\vspace{-0.1in}
\caption{\textbf{Invariance Scores for $\relu$ Models vs $\cat$ Models}. The invariance scores for $\cat$ models are consistently higher than $\relu$ models. The invariance scores jump after max-pooling layers. Moreover, even though the invariance scores tend to increase along with the depth of the networks, the progression is not monotonic.\label{fig:invar_cifar}}
\vspace{-0.1in}
\end{figure*}

\cutsubsectionup
\subsection{A View from Regularization}
\label{sec:regularization}
In general, a model with more trainable parameters is more prone to overfitting.
However, somewhat counter-intuitively, for the all-conv CIFAR experiments, the models with $\cat$ display much less overfitting issue compared to the baseline models with $\relu$, even though it has twice as many parameters (Table~\ref{tab:cifar_results}). 
We contemplate that keeping both positive and negative phase information makes the training more challenging, and such effect has been leveraged to better regularize deep networks, especially when working on small datasets.
%
%

%
Besides the empirical evidence, we can also describe the regularization effect by deriving a Rademacher complexity bound for the $\cat$ layer followed by linear transformation as follows:
\begin{theorem}\label{RCBound}
Let $\mathcal{G}$ be the class of real functions $\R^{\din}\to \R$ with input dimension $\F$, that is, $\mathcal{G} = [\F]_{j=1}^{\din}$. Let $\mathcal{H}$ be a linear transformation function from $\R^{2\din}$ to $\R$, parametrized by $W$, where $\|W\|_2\leq B$. Then, we have
\[\eR (\mathcal{H}\circ \rho_c \circ \mathcal{G}) \leq \sqrt{\din} B \eR (\F).\]
\end{theorem}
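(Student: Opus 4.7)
My plan is to strip off the linear class $\mathcal{H}$ with Cauchy--Schwarz, exploit the coordinate-wise action of $\rho_c$ together with the product structure $\mathcal{G}=\F^{\din}$ to collapse the $\ell_2$-norm of a Rademacher average into a sum of $\din$ identically distributed one-coordinate pieces, and finally invoke Talagrand's scalar contraction to introduce $\eR(\F)$.

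For the first two steps, I start from $\eR(\mathcal{H}\circ\rho_c\circ\mathcal{G}) = \frac{1}{L}\,\E_\sigma\sup_{W,g}\sum_i\sigma_i\,W^T\rho_c(g(x_i))$. Since $\|W\|_2\le B$, the inner sup over $W$ equals $B\,\|\sum_i\sigma_i\rho_c(g(x_i))\|_2$. After an application of Jensen, and using that $\rho_c$ applies $([\cdot]_+,[-\cdot]_+)$ componentwise, I expand $\|\sum_i\sigma_i\rho_c(g(x_i))\|_2^2 = \sum_{j=1}^{\din}\bigl[(\sum_i\sigma_i[g_j(x_i)]_+)^2+(\sum_i\sigma_i[-g_j(x_i)]_+)^2\bigr]$. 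Because $\mathcal{G}=\F^{\din}$, the sup over $g$ factors through the outer sum, and by symmetry across coordinates the $\din$ resulting summands share a common expectation $T := \E_\sigma\sup_{g\in\F}\bigl[(\sum_i\sigma_i[g(x_i)]_+)^2+(\sum_i\sigma_i[-g(x_i)]_+)^2\bigr]$, leaving $\eR(\mathcal{H}\circ\rho_c\circ\mathcal{G}) \le \frac{B}{L}\sqrt{\din\,T}$.

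For the per-coordinate bound, the key ingredient is the algebraic identity $(\sum_i\sigma_i[g(x_i)]_+)^2+(\sum_i\sigma_i[-g(x_i)]_+)^2 = \tfrac{1}{2}\bigl[(\sum_i\sigma_i g(x_i))^2+(\sum_i\sigma_i|g(x_i)|)^2\bigr]$, which follows from $[t]_+-[-t]_+=t$, $[t]_++[-t]_+=|t|$, and the parallelogram law. This splits the $\cat$-squared Rademacher average into a pure-$g$ piece and an $|g|$-piece. Talagrand's scalar contraction inequality, applied to the $1$-Lipschitz map $t\mapsto|t|$ that vanishes at $0$, controls the $|g|$-piece by the bare Rademacher sum $\sum_i\sigma_i g(x_i)$, whose first-moment supremum is by definition $L\,\eR(\F)$. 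Assembling everything gives $T\le L^2\,\eR(\F)^2$ and hence $\eR(\mathcal{H}\circ\rho_c\circ\mathcal{G}) \le \frac{B}{L}\sqrt{\din\,L^2\,\eR(\F)^2} = \sqrt{\din}\,B\,\eR(\F)$.

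The step I expect to be the main obstacle is $T\le L^2\,\eR(\F)^2$: $T$ is a second moment of a Rademacher supremum while $\eR(\F)$ is a first moment, so a naive Jensen step after the contraction points the wrong way. The clean resolution is to perform the parallelogram split and the Talagrand contraction at the first-moment level---bounding $\sqrt{T}$ directly in terms of $L\,\eR(\F)$---rather than trying to bound $T$ by $(L\eR(\F))^2$ after squaring. Arranging this order carefully, so that the final constant is exactly $\sqrt{\din}$ with no stray absolute constants, is the delicate part of the argument.
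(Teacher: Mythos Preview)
Your route diverges from the paper's at the crucial step. After the Cauchy--Schwarz peeling of $\mathcal{H}$ (which you both do identically), the paper does \emph{not} apply Jensen, does not use the parallelogram identity, and never invokes Talagrand contraction. Instead it asserts in one line the pointwise equality
\[
\Bigl\|\bigl[\tfrac{2}{L}\textstyle\sum_i\sigma_i\,\rho_c\!\circ f^j(x_i)\bigr]_{j=1}^{\din}\Bigr\|_2
\;=\;
\Bigl\|\bigl[\tfrac{2}{L}\textstyle\sum_i\sigma_i\, f^j(x_i)\bigr]_{j=1}^{\din}\Bigr\|_2,
\]
justified only by ``the definition of $\cat$ and $\ell^2$ norm'' (the identity $[t]_+^2+[-t]_+^2=t^2$). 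It then exploits the product structure $\mathcal{G}=\F^{\din}$ to pull the sup inside the norm coordinatewise: since every coordinate ranges over the same class $\F$, one gets $\sqrt{\din}\,\sup_{f\in\F}\lvert\tfrac{2}{L}\sum_i\sigma_i f(x_i)\rvert$ exactly, with no second-moment detour. So the paper obtains the constant $\sqrt{\din}$ from a norm identity plus symmetry, not from contraction.

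The obstacle you flagged in your own plan is real and is precisely the place where your argument does not close. Once you apply Jensen to push $\E_\sigma$ inside the square root, the quantity $T$ is a \emph{second} moment of a Rademacher supremum while $\eR(\F)$ is a first moment; there is no general inequality $T\le L^2\eR(\F)^2$. Your proposed fix---``do the split and contraction at the first-moment level''---is not actually carried out, and the natural implementation (bound $\sqrt{A+B}\le\sqrt A+\sqrt B$, then apply the absolute-value form of Talagrand's lemma to the $|g|$-term) produces stray constants (a $1/\sqrt{2}$ from the split and a factor $2$ from the contraction with outer absolute value), so you would land on $c\sqrt{\din}\,B\,\eR(\F)$ with $c>1$, not the claimed $\sqrt{\din}$. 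In short, the Jensen step is the wrong move here; the paper avoids it entirely. It would be worth comparing your parallelogram computation directly with the paper's asserted identity and checking whether the cross term $2\bigl(\sum_i\sigma_i[f^j(x_i)]_+\bigr)\bigl(\sum_i\sigma_i[-f^j(x_i)]_+\bigr)$ really disappears in the intended setting---that is the entire content of the paper's key step, and it is what your longer route is trying to work around.
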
 
%
%
%
The proof is in Section~\ref{sec:proof_complex} of the supplementary materials. 
Theorem~\ref{RCBound} says that the complexity bound of $\cat$ + linear transformation is the same as that of $\relu$ + linear transformation, which is proved by \citet{wan2013regularization}.
%
In other words, although the number of model parameters are doubled by $\cat$, the model complexity does not necessarily increase.
%
%
%
%
%
%
%

\begin{table}[t]
\caption{\textbf{Correlation Comparison.} The averaged correlation between the normalized positive-negative-pair (pair) outgoing weights and the normalized unmatched-pair (non-pair) outgoing weights are both well below $1$ for all layers, indicating that the pair outgoing weights are capable of imposing diverse non-linear manipulation separately on the positive and negative components. \label{tab:correlation}}
\vspace{0.05in}
\centering
\small
\begin{tabular}{c|c|c} 
\hline
\multicolumn{3}{c}{ImageNet Conv1--7 $\cat$ Model}\\ \hline
layer & pair & non-pair \\ \hline
conv1 & 0.372 {\scriptsize$\pm 0.372$}  & 0.165 {\scriptsize$\pm 0.154$}   \\ \hline
conv2& 0.180 {\scriptsize$\pm 0.149$} & 0.157 {\scriptsize$\pm 0.137$}  \\ \hline 
conv3 & 0.462 {\scriptsize$\pm 0.249$} & 0.120 {\scriptsize$\pm 0.120$}   \\ \hline
conv4 & 0.175 {\scriptsize$\pm 0.146$}& 0.119  {\scriptsize$\pm 0.100$}   \\ \hline
conv5& 0.206 {\scriptsize$\pm 0.136$}& 0.105 {\scriptsize$\pm 0.093$}  \\ \hline 
conv6& 0.256 {\scriptsize$\pm 0.124$}& 0.086 {\scriptsize$\pm 0.080$}  \\ \hline 
conv7& 0.131 {\scriptsize$\pm 0.122$}& 0.080 {\scriptsize$\pm 0.070$}  \\ \hline 
\end{tabular}
\vspace{-0.15in}
\end{table}
\cutsectiondown

\cutsubsectionup
\subsection{Towards Learning Invariant Features}
\label{sec:invariant}
%
%
%

%
%
%
%
%
%
We measure the invariance scores using the evaluation metrics from~\citep{goodfellow2009measuring} and draw another comparison between the $\cat$ models and the $\relu$ models.
For a fair evaluation, we compare all 7 conv layers from all-conv $\relu$ model with those from all-conv $\cat$ model trained on CIFAR-10/100. 
In the case of ImageNet experiments, we choose the model where $\cat$ replaces $\relu$ for the first 7 conv layers and compare the invariance scores with the first 7 conv layers from the baseline $\relu$ model.
Section~\ref{invar_score} in the supplementary materials details how the invariance scores are measured. 
%

%
Figure~\ref{fig:invar_cifar} plots the invariance scores for networks trained on CIFAR-10, CIFAR-100, and ImageNet respectively. 
The invariance scores of $\cat$ models are consistently higher than those of $\relu$ models.
%
For CIFAR-10 and CIFAR-100, there is a big increase between conv2 and conv3 then again between conv4 and conv6, which are due to max-pooling layer extracting shift invariance features. 
%
We also observe that although as a general trend, the invariance scores increase while going deeper into the networks--consistent with the observations from~\citep{goodfellow2009measuring}, the progression is not monotonic. 
%
This interesting observation suggests the potentially diverse functionality of different layers in the CNN, which would be worthwhile for future investigation.

In particular, the scores of ImageNet $\relu$ model attain local maximum at conv1, conv4 and conv7 layers. It inspires us to design the architecture where $\cat$ are placed after conv1, 4, and 7 layers to encourage invariance representations while halving the number of filters to limit model capacity.
%
Interestingly, this architecture achieves the best top1 and top5 recognition results when averaging scores from $10$ patches.
%

\begin{figure*}[t]
\subfigure[Original image]{\includegraphics[width=0.19\textwidth]{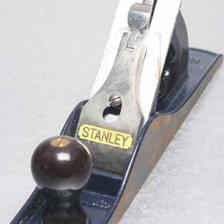}}
\subfigure[conv1]{\includegraphics[width=0.19\textwidth]{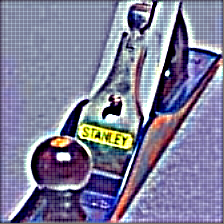}}
\subfigure[conv2]{\includegraphics[width=0.19\textwidth]{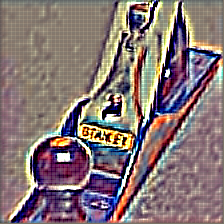}}
\subfigure[conv3]{\includegraphics[width=0.19\textwidth]{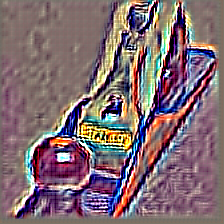}}
\subfigure[conv4]{\includegraphics[width=0.19\textwidth]{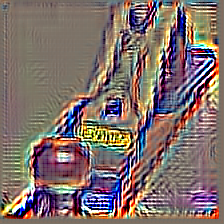}}
\vspace{-0.05in}
\caption{\textbf{$\cat$ Model Reconstructions.} We use a simple linear reconstruction algorithm (see Algorithm~\ref{Recon_1} in the supplementary materials) to reconstruct the original image from conv1-conv4 features (left to right). The image is best viewed in color/screen.
}\label{fig:recon_1_4}
\vspace{-0.1in}
\end{figure*}

\subsection{Revisiting the Reconstruction Property}\label{sec:feature}
%
%
%
In Section~\ref{intuition:conj}, we observe that lower layer convolution filters from $\relu$ models form negatively-correlated pairs.  
Does the pairing phenomenon still exist for $\cat$ models?
%
We take our best $\cat$ model trained on ImageNet (where the first 4 conv layers are integrated with $\cat$) and repeat the histogram experiments to generate Figure~\ref{fig:dist_allconc3}.
%
In clear contrast to Figure~\ref{fig:dist_Alex}, the distributions of  $\nmu^w_i$ from $\cat$ model well align with the distributions of $\nmu^r_i$ from random Gaussian filters. 
%
In other words, each lower layer convolution filter now uniquely spans its own direction without a negatively correlated pairing filter, while $\cat$ implicitly plays the role of ``pair-grouping".

The empirical gap between $\cat$ and $\abs$ justifies that both modulus and phase information are essential in learning deep CNN features. 
In addition, to ensure that the outgoing weights for the positive and negative phase are not merely negations of each other, we measure their correlations for the conv1-7 $\cat$ model trained on ImageNet. 
Table~\ref{tab:correlation} compares the averaged correlation between the (normalized) positive-negative-pair (pair) outgoing weights and the (normalized) unmatched-pair (non-pair) outgoing weights.
The pair correlations are marginally higher than the non-pair ones but both are on average far below $1$ for all layers. 
%
This suggests that, in contrast to $\abs$, the $\cat$ network does not simply focus on the modulus information but imposes different manipulation over the opposite phases.

In Section~\ref{sec:recon}, we mathematically characterize the reconstruction property of convolution layers with $\cat$. 
Proposition~\ref{recon1} claims that the part of an input spanned by the shifts of the filters can be fully recovered. 
ImageNet contains a large number of training images from a wide variety of categories; the convolution filters learned from ImageNet are thus expected to be diverse enough to describe the domain of natural images. 
%
Hence, to qualitatively verify the result from Proposition~\ref{recon1}, we can directly invert features from our best $\cat$ model trained on ImageNet via the simple reconstruction algorithm described in the proof of Proposition~\ref{recon1} (Algorithm~\ref{Recon_1} in the supplementary materials).
Figure~\ref{fig:recon_1_4} shows an image from the validation set along with its reconstructions using conv1-conv4 features (see Section~\ref{image_recon} in the supplementary materials for more reconstruction examples).
%
Unlike other reconstruction methods~\cite{dosovitskiy2015inverting,mahendran2014understanding}, our algorithm does not involve any additional learning.
Nevertheless, it still produces reasonable reconstructions, which supports our theoretical claim in Proposition~\ref{recon1}. 

For the convolution layers involving max-pooling operation, it is less straightforward to perform direct reconstruction. 
Yet we evaluate the conv+$\cat$+max-pooling reconstruction power via measuring properties of the convolution filters and the details are elaborated in Section~\ref{ReconRatio} of the supplementary materials. 

\subsubsection*{Acknowledgments}
We are grateful to Erik Brinkman, Harry Altman and Mark Rudelson for their helpful comments and support. We acknowledge Yuting Zhang and Anna Gilbert for discussions during the preliminary stage of this work. H. Lee was supported in part by ONR N00014-13-1-0762 and NSF CAREER IIS-1453651. We thank Technicolor Research for providing resources and NVIDIA for the donation of GPUs.
\bibliography{example_paper}
\bibliographystyle{icml2016}

\renewcommand{\thesection}{S\arabic{section}}   
\renewcommand{\thetable}{S\arabic{table}}   
\renewcommand{\thefigure}{S\arabic{figure}}
\renewcommand{\theequation}{S\arabic{equation}}

\newpage
\appendix

\clearpage
\part*{Appendix}
\setcounter{equation}{0}
\setcounter{figure}{0}
\setcounter{table}{0}
\setcounter{page}{1}
\makeatletter
\renewcommand{\theequation}{S\arabic{equation}}
\renewcommand{\thefigure}{S\arabic{figure}}
\renewcommand{\bibnumfmt}[1]{[S#1]}
\renewcommand{\citenumfont}[1]{S#1}

\section{Reconstruction Property Proofs}\label{sec:proof_recon}
\subsection{Non-Max-Pooling Case}\label{nonmax}
\begin{proposition}\label{recon1_sup}
Let $x\in\mathbb{R}^{D}$ be an input vector and $W$ be the $D$-by-$K$ matrix whose columns vectors are composed of $w_i \in\mathbb{R}^l, i = 1, \ldots ,K$ convolution filters. Furthermore, let $x = x' +  (x - x')$, where $x' \in\mathrm{range}(W)$ and $x-x' \in \mathrm{ker}(W)$. Then we can reconstruct $x'$ with $\cnn(x)$, where $\cnn(x)\triangleq \cat\left(W^{T}x\right)$.
\end{proposition}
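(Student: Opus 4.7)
The plan is to proceed in two essentially independent steps: first extract $W^{T}x$ from the $\cat$ output, then recover $x'$ from $W^{T}x$ by a linear inversion on $\mathrm{range}(W)$.

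For the first step, I would observe that $\cat$ is \emph{invertible as a map} $\mathbb{R}^K \to \mathbb{R}^{2K}$: by definition $\rho_c(y) = ([y]_+, [-y]_+)$, and since for any real $y$ we have $y = [y]_+ - [-y]_+$, the original vector $W^{T}x \in \mathbb{R}^K$ can be recovered component-wise from $\cnn(x) = \cat(W^{T}x)$ simply by subtracting the second block from the first. This reduces the problem to: given $W^{T}x$, reconstruct $x'$.

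For the second step, the key observation is that the linear map $y \mapsto W^{T}y$ restricted to $\mathrm{range}(W)$ is injective. To see this, if $y = W\alpha \in \mathrm{range}(W)$ satisfies $W^{T}y = 0$, then $\alpha^{T}W^{T}W\alpha = \|W\alpha\|_{2}^{2} = 0$, so $y = W\alpha = 0$. Combined with the decomposition $x = x' + (x-x')$ where the second term lies in $\ker(W^{T})$ (equivalently, the orthogonal complement of $\mathrm{range}(W)$, which is how I would interpret the proposition's ``$\ker(W)$''), we get $W^{T}x = W^{T}x'$; since $W^{T}$ is injective on $\mathrm{range}(W)$ and $x' \in \mathrm{range}(W)$, the element $x'$ is uniquely determined. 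I would then give the explicit formula $x' = W(W^{T}W)^{+}W^{T}x$, which realizes $x'$ as the orthogonal projection of $x$ onto $\mathrm{range}(W)$ and can be computed from $W^{T}x$ alone.

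There is no real obstacle here; the statement is essentially a packaging of the identity $y = [y]_+ - [-y]_+$ together with a standard fact about orthogonal projection onto a column space. The only mild subtlety is that $W^{T}W$ need not be invertible (the filters may be linearly dependent), which is why I would state the reconstruction with the Moore--Penrose pseudoinverse rather than $(W^{T}W)^{-1}$. For the write-up I would also indicate that this gives a concrete algorithm: compute $W^{T}x$ from $\cnn(x)$ by a coordinate-wise subtraction, then apply $W(W^{T}W)^{+}$ (which can be precomputed from $W$) to obtain $x'$; this matches the linear reconstruction procedure referenced later in Section~\ref{sec:feature}.
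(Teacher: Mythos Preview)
Your proposal is correct and matches the paper's own proof essentially line for line: the paper also inverts $\cat$ to recover $z=W^{T}x$ and then applies the Moore--Penrose pseudoinverse, noting that $(W^{T})^{+}W^{T}$ is the orthogonal projector onto $\mathrm{range}(W)$. Your formula $W(W^{T}W)^{+}$ is exactly $(W^{T})^{+}$, so the two reconstructions coincide; your write-up simply adds the explicit justification (the identity $y=[y]_+-[-y]_+$ and the injectivity of $W^{T}$ on $\mathrm{range}(W)$) that the paper leaves implicit.
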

\begin{algorithm}[t]
\caption{Reconstruction over a single convolution region without max-pooling \label{Recon_1}}
\begin{algorithmic}[1]
\STATE $\cnn(x)\gets$ conv features.
\STATE $W\gets$ weight matrix.
\STATE Obtain the linear responses after convolution by reverting $\cat$:  $z = \rho_c^{-1}(\cnn(x))$.
\STATE Compute the Moore Penrose pseudoinverse of $W^T$, $(W^T)^{+}$.
\STATE Obtain the final reconstruction: $x' = (W^T)^{+}z$.
\end{algorithmic}
\end{algorithm}
\begin{algorithm}[t]
\caption{Reconstruction over a single max-pooling region\label{Recon_2}}
\begin{algorithmic}[1]
\STATE $\cnn(x)\gets$ conv features after max-pooling.
\STATE $\widehat{W}_x\gets$ weight matrix consisting of shifted conv filters that are activated by $x$.
\STATE Obtain the linear responses after convolution by reverting $\cat$:  $z = \rho_c^{-1}(\cnn(x))$.
\STATE Compute the Moore Penrose pseudoinverse of $\widehat{W}_x^T$, $(\widehat{W}_x^T)^{+}$.
\STATE Obtain the final reconstruction: $x' = (\widehat{W}_x^T)^{+}z$.
\end{algorithmic}
\end{algorithm}
\begin{proof}
We show $x'$ can be reconstructed from $\cnn(x)$ by providing a simple reconstruction algorithm described by Algorithm~\ref{Recon_1}.
First, apply the inverse function of $\cat$ on $\cnn(x)$: $z = \rho_c^{-1}(\cnn(x))$.
Then, compute the Moore Penrose pseudoinverse of $W^T$, denote by $(W^T)^{+}$. 
By definition $Q = (W^T)^{+} W^T $ is the orthogonal projector onto $\mathrm{range}(W)$, therefore we can obtain $x' = (W^T)^{+}z$.
%
\end{proof}
\subsection{Max-Pooling Case}\label{maxpooling}
\paragraph{Problem Setup.} 
Again, let $x\in\mathbb{R}^{D}$ be an input vector 
 and $w_{i}\in\mathbb{R}^{\ell}$, $i=1,\ldots,K$ be convolution filters.
We denote $w_{i}^{j}\in\mathbb{R}^{D}$ the $j^{\text{th}}$ coordinate shift of the convolution filter $w_{i}$ with a fixed stride length of $s$, i.e., $w_{i}^{j}[(j-1)s+k] = w_{i}[k]$ for $k = 1,\ldots,\ell$, and $0$'s for the rest of entries in the vector.
Here, we assume $D-\ell$ is divisible by $s$ and thus there are $n=\frac{D-\ell}{s}+1$ shifts for each $w_i$.
%
%
We define $W$ to be the $D \times nK$ matrix whose columns are the shifts $w_i^j$, $j=1,\ldots,n$, for $w_{i}$; the columns of $W$ are divided into $K$ blocks, with each block consisting of $n$ shifts of a single filter.
%
The conv + $\cat$ + max-pooling layer can be defined by first multiplying an input signal $x$ by the matrix $W^{T}$ (conv), separating positive and negative phases then applying the $\relu$ non-linearity ($\cat$), and selecting the maximum value in each of the $K$ block (max-pooling).
The operation is denoted as $\cnn:\mathbb{R}^{D}\rightarrow\mathbb{R}^{2K}$ such that $\cnn(x)\triangleq g\left(W^{T}x\right)$, where $g\triangleq \pool\circ\cat$.
Figure~\ref{fig:simple_conv_pool_illustration} illustrates an example of the problem setting. 
\begin{figure}[t]
\centering
\includegraphics[width=0.45\textwidth]{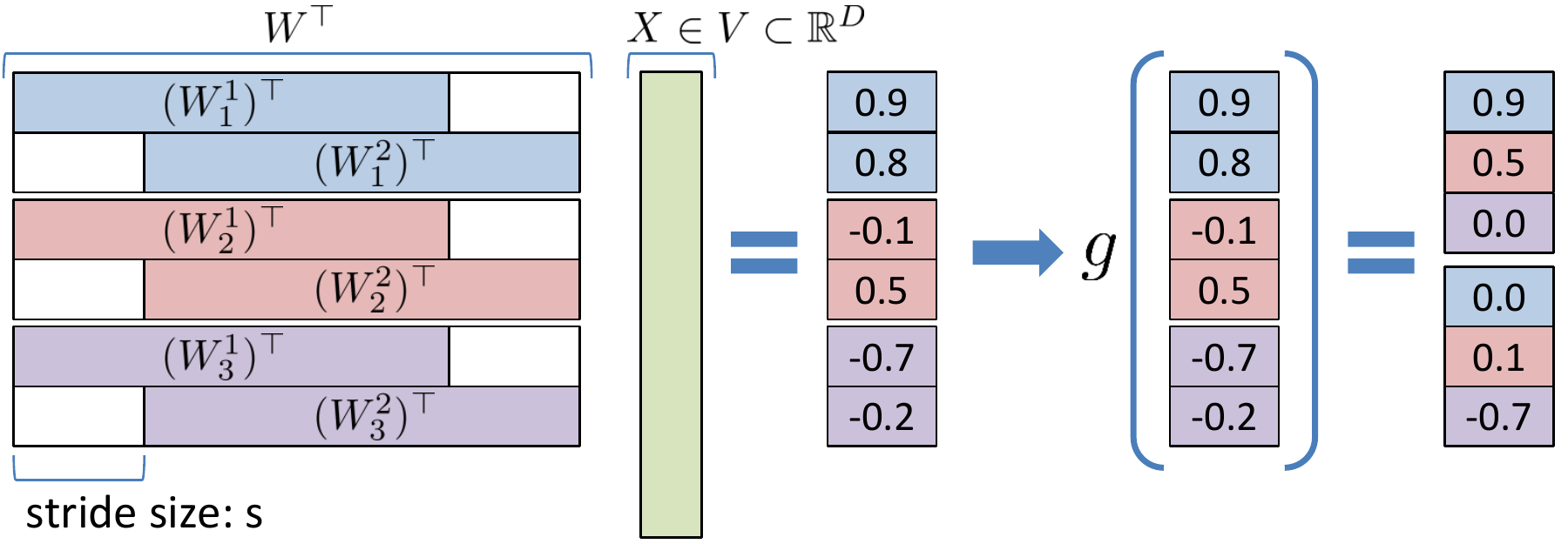}
\caption{\textbf{An illustration of convolution, $\cat$, and max-pooling operation.} For simplicity, we describe with 3 convolution filters ($W_{1},W_{2},W_{3}$) with stride of $s$, and with $2\times 2$ pooling. In Figure (a), $g$ denotes $\cat$ followed by the max-pooling operation.\label{fig:simple_conv_pool_illustration}}
\end{figure}
\paragraph{Assumption.}To reach a non-trivial bound when max-pooling is present, we put a constraint on the input space $\mathcal{V}$: $\forall x \in \mathcal{V}$, there exists $\{c_{i}^{j}\}_{i=1,\cdots,K}^{j=1,\cdots,n}$ such that
\begin{equation}\label{assumption1}
x = \sum_{i=1}^{K}\sum_{j=1}^{n} c_{i}^{j} w_{i}^{j}, \text{ where } \sum_{j=1}^{n} \mathbf{1}\{c_{i}^{j} > 0\} \leq 1, \;\;\forall i.
\end{equation}
In other words, we assume that an input $x$ is a linear combination of the shifted convolution filters $\{w_{i}^{j}\}_{i=1,\cdots,K}^{j=1,\cdots,n}$ such that over a single max-pooling region, only one of the shifts participates: $ \sum_{j=1}^{n} \mathbf{1}\{c_{i}^{j} > 0\} \leq 1$:
a slight translation of an object or viewpoint change does not alter the nature of a natural image, which is how max-pooling generates shift invariant features by taking away some fine-scaled locality information.
%
%

Next, we denote the matrix consisting of the shifts whose corresponding $c_i^j$'s are non-zero by $W_x$ , and the vector consisting of the non-zero $c_i^j$'s by $\vct{c}_x$, i.e. $W_x\vct{c}_x = x$. 
%
%
%
Also, we denote the matrix consisting of the shifts whose activation is positive and selected after max-pooling operation by $\widehat{W}^{+}_x$, negative by $\widehat{W}^{-}_x$.  
Let $\widehat{W}_x \triangleq \left[\widehat{W}^{+}_x, \widehat{W}^{-}_x\right]$.
Finally, we give notation, $\widetilde{W}_x$, to the matrix consisting of a subset of $\widehat{W}_x$, such that the $i$th column comes from $\widehat{W}^{+}_x$ if $c_i^j\geq 0$ or from $\widehat{W}^{-}_x$ if otherwise.
%

\paragraph{Frame Theory.} 
Before proceeding to the main theorem and its proof, we would like to introduce more tools from Frame Theory.
\begin{Definition}\label{frame_def}
A frame is a set of elements of a vector space $V$, $\{\f_k\}_{k=1, \cdots , K}$, which satisfies the frame condition: there exist two real numbers $C_1$ and $C_2$, the \emph{frame bounds}, such that $0<C_1\leq C_2<\infty$, and $\forall v\in V$
\begin{equation*}
	C_1\|v\|_2^2 \leq \sum^K_{k=1} |\langle v, \f_i\rangle|^2 \leq C_2 \|v\|_2^2.
\end{equation*} \cite{christensen2003introduction}
\end{Definition}
\begin{proposition}\label{frame}
Let $\{\f_k\}_{k=1,\ldots,K}$ be a sequence in $V$, then $\{\f_k\}$ is a frame for $\mathrm{span}\{\f_k\}$. Hence, $\{\f_k\}$ is a frame for $V$ if and only if $V = \mathrm{span}\{\f_k\}$\footnote{There exist infinite
spanning sets that are not frames, but we will not be concerned with
those here since we only deal with finite dimensional vector spaces.}.  \cite{christensen2003introduction}
\end{proposition}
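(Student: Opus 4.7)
The plan is to prove the main claim by constructing explicit frame bounds for $\{\phi_k\}$ regarded as a sequence in $W := \mathrm{span}\{\phi_k\}$. The upper bound is immediate from Cauchy--Schwarz: for every $v \in W$,
\[
\sum_{k=1}^{K} |\langle v, \phi_k\rangle|^2 \;\leq\; \Bigl(\sum_{k=1}^{K}\|\phi_k\|^2\Bigr)\|v\|_2^2,
\]
so one may take $C_2 = \sum_k \|\phi_k\|^2 < \infty$. The lower bound is the main point and is where I would expect the only real work. I would define $f(v) := \sum_{k=1}^{K} |\langle v,\phi_k\rangle|^2$ and observe that $f$ is continuous on $W$. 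Since $W$ is a finite-dimensional subspace, the unit sphere $S := \{v\in W:\|v\|_2 = 1\}$ is compact, so $f$ attains its infimum $C_1$ on $S$. The crux is to show $C_1 > 0$: if $f(v) = 0$ for some $v\in S$, then $\langle v,\phi_k\rangle = 0$ for every $k$, so $v$ is orthogonal to every element of $\mathrm{span}\{\phi_k\} = W$; but $v\in W$, forcing $v = 0$ and contradicting $\|v\|_2 = 1$. Homogeneity then extends the bound $C_1\|v\|_2^2 \leq f(v)$ to all $v\in W$.

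For the ``if and only if'' clause, the forward direction is a direct corollary of what was just proved: if $V = \mathrm{span}\{\phi_k\}$, then $\{\phi_k\}$ is a frame for $V$ by the argument above. For the converse, suppose $\{\phi_k\}$ is a frame for $V$ with positive lower bound $C_1$. If $v\in V$ satisfies $v \perp \phi_k$ for every $k$, then the frame inequality forces $C_1\|v\|_2^2 \leq 0$, hence $v = 0$. Thus $\mathrm{span}\{\phi_k\}^{\perp} \cap V = \{0\}$ inside $V$, which in finite dimension gives $\mathrm{span}\{\phi_k\} = V$.

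The only step I expect to require any care is establishing the strict positivity of $C_1$, and even that reduces to combining compactness of the unit sphere in a finite-dimensional subspace with the definitional identity $\mathrm{span}\{\phi_k\} = W$. The rest is bookkeeping: Cauchy--Schwarz for the upper bound and a clean orthogonality argument for the converse of the biconditional. Because the ambient setting is finite-dimensional (as emphasized in the footnote of the paper), no additional completeness or closure hypotheses are needed.
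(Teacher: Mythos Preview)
Your proof is correct. However, note that the paper does not actually supply its own proof of this proposition: it is stated as a background fact from frame theory and attributed to Christensen's textbook, with no argument given in the paper itself. Your compactness argument for the lower frame bound (continuity of $f(v)=\sum_k|\langle v,\phi_k\rangle|^2$ on the unit sphere of the finite-dimensional span, combined with the observation that $f(v)=0$ forces $v\perp W$) is the standard one and is exactly what one finds in Christensen. The Cauchy--Schwarz upper bound and the orthogonality argument for the converse of the biconditional are likewise standard and correct. There is nothing to compare against in the paper beyond the citation, so your write-up effectively fills in what the paper leaves to the reference.
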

\begin{Definition}
Consider now $V$ equipped with a frame $\{\f_k\}_{k=1,\ldots , K}$. The \emph{Analysis Operator}, $\T: V\to\R^K,$ is defined by $\T v = \{\langle v, \f_k\rangle\}_{k=1, \ldots, K}$. The \emph{Synthesis Operator}, $\T^*: \R^K\to V$, is defined by  $\T^*\{c_k\}_{k=1,\ldots , K} = \sum_{k=1}^K c_k \f_k$, which is the adjoint of the Analysis Operator.
The \emph{Frame Operator}, $\SO: V\to V$, is defined to be the composition of $\T$ with its adjoint: 
\[
\SO v = \T^*\T v.
\]
The Frame Operator is always invertible. \cite{christensen2003introduction}
\end{Definition}
\begin{theorem}\label{frame_bound}
The optimal lower frame bound $C_1$ is the smallest eigenvalue of $\SO$; the optimal upper frame bound $C_2$ is the largest eigenvalue of $\SO$.  \cite{christensen2003introduction}
\end{theorem}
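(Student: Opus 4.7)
The plan is to reduce the frame inequality to a Rayleigh quotient estimate for the self-adjoint operator $\SO$, and then invoke the spectral theorem. The key identity I will exploit is that the middle term $\sum_{k=1}^{K}|\langle v,\f_k\rangle|^2$ in the frame condition can be rewritten as $\langle \T v,\T v\rangle = \langle \T^*\T v, v\rangle = \langle \SO v, v\rangle$, using only the definitions of $\T$, $\T^*$, and $\SO$. Thus the frame inequality is equivalent to
\[
C_1 \|v\|_2^2 \;\leq\; \langle \SO v, v\rangle \;\leq\; C_2 \|v\|_2^2 \qquad \forall v \in V.
\]

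First I would verify that $\SO$ is self-adjoint and positive semi-definite: self-adjointness follows from $\SO^* = (\T^*\T)^* = \T^*\T = \SO$, and positivity from $\langle \SO v, v\rangle = \|\T v\|_2^2 \geq 0$. Combined with the invertibility of $\SO$ noted in the definition, this guarantees that all eigenvalues of $\SO$ are strictly positive real numbers. Since $V$ is finite dimensional, the spectral theorem yields an orthonormal eigenbasis $\{e_i\}$ with eigenvalues $0<\lambda_{\min}=\lambda_1\leq\cdots\leq\lambda_n=\lambda_{\max}$.

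Next I would expand an arbitrary $v = \sum_i \alpha_i e_i$ and compute $\langle \SO v, v\rangle = \sum_i \lambda_i |\alpha_i|^2$ and $\|v\|_2^2 = \sum_i |\alpha_i|^2$, from which the two-sided Rayleigh bound
\[
\lambda_{\min}\|v\|_2^2 \;\leq\; \langle \SO v, v\rangle \;\leq\; \lambda_{\max}\|v\|_2^2
\]
is immediate, with equality on the left achieved at $v=e_1$ and on the right at $v=e_n$. This shows both that $\lambda_{\min}$ and $\lambda_{\max}$ are valid frame bounds and that no larger $C_1$ or smaller $C_2$ can work, which is exactly the optimality claim.

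There is no serious obstacle here; the result is essentially the Courant--Fischer characterization applied to $\SO$. The only care needed is bookkeeping to match the statement: ensuring we identify \emph{optimal} lower bound with the \emph{largest} valid $C_1$ (which forces $C_1=\lambda_{\min}$) and \emph{optimal} upper bound with the \emph{smallest} valid $C_2$ (which forces $C_2=\lambda_{\max}$), and checking that the attainment at eigenvectors genuinely rules out strict improvement in either direction.
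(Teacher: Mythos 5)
Your proof is correct and complete: rewriting $\sum_k|\langle v,\f_k\rangle|^2$ as $\langle \SO v,v\rangle$ and applying the spectral theorem to the self-adjoint, positive-definite operator $\SO$ on the finite-dimensional space $V$ gives exactly the two-sided Rayleigh bound, and attainment at the extremal eigenvectors establishes optimality in both directions. Note that the paper does not prove this statement at all --- it quotes it as a known result from \citet{christensen2003introduction} --- and your argument is precisely the standard textbook proof, valid here because the paper restricts attention to finite-dimensional $V$ (in infinite dimensions the optimal bounds would be the infimum and supremum of the spectrum rather than eigenvalues).
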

We would also like to investigate the matrix representation of the operators $\T, \T^*$ and $\SO$. 
Consider $V$, a subspace of ${\mathbb R}^D$, equipped with a frame $\{\f_k\}_{k=1,\cdots , K}$.
Let $U\in R^{D \times d}$ be a matrix whose column vectors form an orthonormal
basis for $V$ (here $d$ is the
dimension of $V$). 
Choosing $U$ as the basis for $V$ and choosing the standard basis $\{e_k\}_{k=1,\cdots , K}$ as the basis for $R^K$, the matrix representation of $\T$ is $\Tm= W^TU$, where $W$ is the matrix whose column vectors are $\{\f^T_k\}_{k=1,\cdots , K}$. 
Its transpose, $\Ttm$, is the matrix representation for $\T^*$; the matrix representation for $\SO$ is $\Sm= \tilde{\T}^*\tilde{\T}$. 
\begin{lemma}\label{normlowerbound}
Let $x\in \R^D$ and $W$ an $D$-by-$K$ matrix. 
If $x\in \mathrm{range}(W)$, then $\sigma_{\min}\|x\|_2 \leq \|W^Tx\|_2
\leq \sigma_{\max}\|x\|_2$, where $\sigma_{\min}$ and $\sigma_{\max}$ are the least and largest singular value of $V$ respectively. 
\end{lemma}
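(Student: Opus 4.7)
The plan is to leverage the Frame Theory machinery that the paper has just developed, since it provides exactly the inequality $\|W^T x\|_2^2 = \sum_k |\langle x,w_k\rangle|^2$ bounded above and below by multiples of $\|x\|_2^2$. First I would observe that the columns $\{w_k\}_{k=1}^{K}$ of $W$, by Proposition~\ref{frame}, form a frame for $\mathrm{span}\{w_k\} = \mathrm{range}(W)$. Restricting attention to this subspace $V = \mathrm{range}(W)$, the frame condition (Definition~\ref{frame_def}) immediately yields, for every $x \in V$,
\[
C_1 \|x\|_2^2 \;\leq\; \|W^{T} x\|_2^2 \;\leq\; C_2 \|x\|_2^2 ,
\]
where $C_1$ and $C_2$ are the optimal frame bounds associated to $\{w_k\}$ on $V$.

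Next I would translate these frame bounds into singular values of $W$. By Theorem~\ref{frame_bound}, $C_1$ and $C_2$ are the smallest and largest eigenvalues of the frame operator $\mathcal{S}$, whose matrix representation (in an orthonormal basis $U$ for $V$) is $\tilde{\mathcal{S}} = U^{T}WW^{T}U$. Choosing $U$ to consist of the left singular vectors of $W$ corresponding to its nonzero singular values and writing a reduced SVD $W = U\Sigma V_W^{T}$, a direct substitution shows $\tilde{\mathcal{S}} = \Sigma^2$. Hence the eigenvalues of $\tilde{\mathcal{S}}$ are exactly the squared nonzero singular values of $W$, so $C_1 = \sigma_{\min}^2$ and $C_2 = \sigma_{\max}^2$. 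Taking square roots in the frame inequality gives the claimed bound.

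The argument has essentially no hard step; the only point that requires a moment of care is the identification $\tilde{\mathcal{S}} = \Sigma^2$, which hinges on choosing $U$ to be the left singular basis of $W$ restricted to $\mathrm{range}(W)$, so that the $V_W^{T} V_W = I$ cancellation takes place. One could equivalently bypass the frame formalism entirely and prove the lemma in two lines by writing $x = U y$ with $\|y\|_2 = \|x\|_2$ and computing $W^{T} x = V_W \Sigma y$, but since the surrounding exposition is couched in frame-theoretic language, the frame-based derivation is more in keeping with the paper's style and reuses the results just stated.
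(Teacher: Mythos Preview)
Your proposal is correct and follows essentially the same frame-theoretic route as the paper: both invoke Proposition~\ref{frame} to recognize the columns of $W$ as a frame for $\mathrm{range}(W)$, then use Theorem~\ref{frame_bound} to identify the optimal frame bounds with the extreme eigenvalues of the frame operator. The only minor differences are that you obtain the upper bound from the frame inequality itself while the paper pulls it directly from the operator norm, and you are more explicit than the paper in justifying why the eigenvalues of $\tilde{\mathcal{S}}$ coincide with the squared singular values of $W$ (via the reduced SVD computation $\tilde{\mathcal{S}}=\Sigma^2$), a step the paper simply asserts.
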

\begin{proof}
By Proposition~\ref{frame}, the columns in $W$ form a frame for $\mathrm{range}(W)$.
Let $U$ be an orthonormal basis for $\mathrm{range}(W)$. Then the matrix representation under $U$ for the Analysis Operator, $\T$, is $\Tm = W^TU$, and the corresponding representation for $x$ under $U$ is $U^T x$. Now, by Theorem~\ref{frame_bound}, we have: 
\begin{equation*}
  \lambda_{\min}\|x\|^2_2\leq \| \Tm x\|^2_2 = \|W^T U U^T x\|^2_2 = \|W^T x\|^2_2,
\end{equation*}
where $\lambda_1$ is the least eigenvalue of $\Sm$. 
Therefore, we have $\sigma_{\min}\|x\|_2 \leq \|W^T x\|_2$, where $\sigma_{\min}$ is the least singular value of $W$. 
Lastly, by the definition of operator-induced matrix norm, we have the upper bound  $\|W^Tx\|_2 \leq \sigma_{\max}\|x\|_2$
\end{proof}

\paragraph{Reconstruction Property.}
Now we are ready to present the theorem that characterizes  the reconstruction property of the conv+$\cat$+max-pooling operation.
\begin{theorem}\label{recon_error}
Let $x\in \mathcal{V}$ and satisfy the assumption from Equation~\eqref{assumption1}. 
Then we can obtain $x'$, the reconstruction of $x$ using $\cnn(x)$ 
such that
\begin{equation*}\label{recon_ratio}
\frac{\|x-x'\|_2}{\|x\|_2}\leq \sqrt{\frac{\widetilde{\lambda}_{\max}-\lambda_{\min}  }{\lambda_{\min} }},
\end{equation*}
where $\lambda_{\min}$ and $\widetilde{\lambda}_{\max}$ are square of the minimum and maximum singular values of $W_x$ and $\widetilde{W}_x$ respectively.
\end{theorem}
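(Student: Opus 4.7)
The plan is to set $x' = (\widehat{W}_x^T)^+ z$ as in Algorithm~\ref{Recon_2}: since $\cat$ preserves sign information, inverting it on the surviving max-pool outputs gives $z = \widehat{W}_x^T x$, and the pseudoinverse yields $x' = P_{\widehat{W}_x} x$, the orthogonal projection of $x$ onto $\mathrm{range}(\widehat{W}_x)$. Pythagoras then reduces the problem to lower-bounding $\|x'\|_2^2$ in terms of $\|x\|_2^2$; everything else is a chain of frame inequalities built from Lemma~\ref{normlowerbound}.

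The main chain I would establish is
\[
\|x'\|_2^2 \;\geq\; \|P_{\widetilde{W}_x} x\|_2^2 \;\geq\; \frac{\|\widetilde{W}_x^T x\|_2^2}{\widetilde{\lambda}_{\max}} \;\geq\; \frac{\|W_x^T x\|_2^2}{\widetilde{\lambda}_{\max}} \;\geq\; \frac{\lambda_{\min}\|x\|_2^2}{\widetilde{\lambda}_{\max}}.
\]
The first inequality is subspace monotonicity of orthogonal projections, since $\widetilde{W}_x$ is a column-submatrix of $\widehat{W}_x$. The second applies Lemma~\ref{normlowerbound} to $\widetilde{W}_x$ on its own range together with $\widetilde{W}_x^T x = \widetilde{W}_x^T P_{\widetilde{W}_x} x$ (which holds because $(I-P_{\widetilde{W}_x})x \perp \mathrm{range}(\widetilde{W}_x)$). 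The fourth applies the same lemma to $W_x$ and uses $x \in \mathrm{range}(W_x)$, which is immediate from~\eqref{assumption1}. Combining gives $\|x-x'\|_2^2 \leq \|x\|_2^2 (\widetilde{\lambda}_{\max}-\lambda_{\min})/\widetilde{\lambda}_{\max}$, which I weaken to the stated form using $\widetilde{\lambda}_{\max}\geq\lambda_{\min}$ (itself a consequence of $\widetilde{W}_x$'s operator norm being at least $\|W_x^T x\|_2/\|x\|_2 \geq \sqrt{\lambda_{\min}}$) before taking square roots.

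The crux is the third inequality, $\|\widetilde{W}_x^T x\|_2^2 \geq \|W_x^T x\|_2^2$, which I would argue coordinate-by-coordinate. Under~\eqref{assumption1}, each filter $i$ contributes exactly one column to each matrix: $w_i^{j_i}$ to $W_x$ (the unique shift with a non-zero coefficient) and $w_i^{\tilde{j}_i}$ to $\widetilde{W}_x$ (the shift selected by max-pool in the sign-matched phase). Because $j_i$ is itself a candidate in the corresponding $\max_j [\pm w_i^{jT} x]_+$, we obtain $|w_i^{\tilde{j}_i T} x| \geq |w_i^{j_i T} x|$, and summing over $i$ gives the bound. The main obstacle is using~\eqref{assumption1} carefully enough to both align the columns of $W_x$ and $\widetilde{W}_x$ one-to-one and to rule out the degenerate case where the sign-matched max-pool vanishes while $w_i^{j_i T} x$ does not; all remaining steps are routine applications of the frame-theoretic machinery already in place.
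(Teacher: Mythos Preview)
Your proposal is correct and follows essentially the same route as the paper's proof: construct $x'$ via Algorithm~\ref{Recon_2}, introduce the intermediate projection onto $\mathrm{range}(\widetilde{W}_x)$, chain the frame inequalities $\lambda_{\min}\|x\|_2^2 \leq \|W_x^T x\|_2^2 \leq \|\widetilde{W}_x^T x\|_2^2 \leq \widetilde{\lambda}_{\max}\|P_{\widetilde{W}_x}x\|_2^2$, and finish with Pythagoras. The only cosmetic difference is that you apply Pythagoras directly to $x'$ (obtaining the slightly sharper intermediate bound with $\widetilde{\lambda}_{\max}$ in the denominator before weakening), whereas the paper applies it to $\tilde{x}=P_{\widetilde{W}_x}x$ and then invokes $\|x-x'\|_2 \leq \|x-\tilde{x}\|_2$; your coordinate-wise justification of $\|\widetilde{W}_x^T x\|_2 \geq \|W_x^T x\|_2$ is precisely what the paper compresses into ``the nature of max-pooling and the assumption on $x$.''
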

\begin{proof}
We use similar method to reconstruct as described by Algorithm~\ref{Recon_2}: 
first reverse the $\cat$ activation and obtain $z = \rho_c^{-1}(\cnn(x))$;
then compute the Moore Penrose pseudoinverse of $\widehat{W}_x^T$, denote by $(\widehat{W}_x^T)^{+}$; 
finally, obtain $x' = (\widehat{W}_x^T)^{+}z$, since by definition, $Q = (\widehat{W}_x^T)^{+} \widehat{W}_x^T $ is the orthogonal projector onto $\mathrm{range}(\widehat{W}_x)$. 
To proceed the proof, we denote the subset of $z$ which matches the corresponding activation of the filters from $\widetilde{W}_x$ by $\tilde{z}$, compute the Morre Penrose pseudoinverse of $\widetilde{W}_x$ and obtain $\tilde{x} = (\widetilde{W}_x^T)^{+}\tilde{z}$. 
Note that since $\mathrm{range}(\widetilde{W}_x)$ is a subspace of $\mathrm{range}(\widehat{W}_x)$, therefore, the reconstruction $x'$ will always be equal or better than $\tilde{x}$, i.e. $\|x-x'\|_2\leq\|x-\tilde{x}\|_2$. 
From Lemma~\ref{normlowerbound}, the nature of max-pooling and the assumption on $x$ (Equation~\ref{assumption1}), we derive the following inequality
\begin{align*}
\lambda_{\min} \|x\|^2_2 \leq\|W^T_x x\|_2 &\leq \|\widetilde{W}^T_x x\|^2_2 \\
& = \|\widetilde{W}^T_x \tilde{x}\|^2_2 \leq \tilde{\lambda}_{\max}\|\tilde{x}\|^2_2,
\end{align*}
where $\lambda_{\min}$ and $\tilde{\lambda}_{\max}$ are square of the minimum and maximum singular values of $W_x$ and $\widetilde{W}_x$ respectively. 

Because $\tilde{x}$ is the orthogonal projection of $x$ on to $\mathrm{range}(\widetilde{W}_x)$, thus $\|x\|^2_2 = \|\tilde{x}\|^2_2 + \|x-\tilde{x}\|^2_2$. Now substitute $\|x\|_2^2$ with $\|\tilde{x}\|^2_2 + \|x-\tilde{x}\|^2_2$, we have: 
$$
\begin{aligned}
&\lambda_{\min}  (\|\tilde{x}\|^2_2 + \|x-\tilde{x}\|^2_2) \leq \tilde{\lambda}_{\max}\|\tilde{x}\|^2_2\\
&\|x-\tilde{x}\|^2_2\leq \frac{\tilde{\lambda}_{\max}-\lambda_{\min}  }{\tilde{\lambda}_{\min} }\|\tilde{x}\|^2_2\\
&\|x-x'\|^2_2\leq \frac{\tilde{\lambda}_{\max}-\lambda_{\min}  }{\lambda_{\min} }\|x\|^2_2\\
&\|x-x'\|_2\leq \sqrt{\frac{\tilde{\lambda}_{\max}-\lambda_{\min}  }{\lambda_{\min} }}\|x\|_2\\
&\frac{\|x-x'\|_2}{\|x\|_2}\leq \sqrt{\frac{\widetilde{\lambda}_{\max}-\lambda_{\min}  }{\lambda_{\min} }}.
\end{aligned}
$$
\end{proof}
We refer to the term $\frac{\|x-x'\|_2}{\|x\|_2}$ as the \emph{reconstruction ratio} in later discussions.
\section{Proof of Model Complexity Bound}
\label{sec:proof_complex}
\begin{Definition}
\emph{(Rademacher Complexity)} For a sample $S = \{x_1,\cdots , x_L\}$ generated by a distribution $D$ on set $X$ and a real-valued function class $\F$ in domain $X$, the empirical Rademacher complexity of $\F$ is the random variable: 
\[
\eR(\F) = \Expect_\sigma\left[\sum_{f\in\F}|\frac{2}{L} \sigma_i f(x_i)| \bigg| x_1,\cdots, x_L \right],
\]
where $\sigma_i$'s are independent uniform $\{\pm 1 \}$-valued (Rademacher) random variables. The Rademacher complexity of $\F$ is $R_L(\F) = \Expect_S \left[ \eR(\F)\right]$
\end{Definition}
\begin{lemma}\label{comp}
\emph{(Composition Lemma)} Assume $\rho:\R\to\R$ is a $L_\rho$-Lipschitz continuous function, i.e. , $|\rho(x)-\rho(y)|\leq L_\rho |x-y|$. Then $\eR(\rho\circ\F) = L_\rho \eR(\F)$. 
\end{lemma}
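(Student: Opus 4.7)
\emph{Approach.} This is the classical Ledoux--Talagrand contraction inequality. The plan is a ``peeling'' argument that replaces one composition $\rho(f(x_k))$ by its Lipschitz-linearized counterpart $L_\rho f(x_k)$ at a time, conditioning on the remaining Rademacher signs. The induction carries a running quantity in which the first $k-1$ coordinates have already been linearized and the remaining $L-k+1$ are still composed with $\rho$; each inductive step absorbs a factor of $L_\rho$ into a single coordinate.

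\emph{Peeling step.} Fix the sample and all $\sigma_i$ with $i\ne k$, and set $A_f := \tfrac{2}{L}\sum_{i<k}\sigma_i L_\rho f(x_i) + \tfrac{2}{L}\sum_{i>k}\sigma_i \rho(f(x_i))$, i.e.\ the already-peeled plus the not-yet-peeled contributions. Averaging over $\sigma_k\in\{\pm 1\}$ and using two independent suprema for the two signs, I would derive the identity
\begin{align*}
\mathbb{E}_{\sigma_k}\sup_f\Bigl[A_f+\tfrac{2}{L}\sigma_k\,\rho(f(x_k))\Bigr]
= \tfrac{1}{2}\sup_{f,f'}\Bigl[A_f+A_{f'}+\tfrac{2}{L}\bigl(\rho(f(x_k))-\rho(f'(x_k))\bigr)\Bigr].
\end{align*}
By $L_\rho$-Lipschitzness and the $(f,f')$-symmetry of the joint sup, the $\rho$-difference can be upper bounded by $L_\rho\bigl(f(x_k)-f'(x_k)\bigr)$ (swapping $f\leftrightarrow f'$ to fix the sign if necessary). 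Reversing the splitting then rewrites the right-hand side as $\mathbb{E}_{\sigma_k}\sup_f\bigl[A_f+\tfrac{2L_\rho}{L}\sigma_k f(x_k)\bigr]$, which is precisely the inductive statement with the index $k$ shifted by one.

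\emph{Iteration and conclusion.} Starting from $k=1$ (where $A_f$ is purely $\rho$-composed) and iterating up through $k=L$, every coordinate gets linearized, accumulating one factor of $L_\rho$ per step. Pulling this deterministic constant out of the sup and the expectation yields $\eR(\rho\circ\F)\le L_\rho\,\eR(\F)$.

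\emph{Main obstacle.} The cleanest version of the peeling identity above is for $\sup_f$ rather than $\sup_f |\cdot|$. To handle the absolute value present in the paper's definition of $\eR$, I would pre-symmetrize by passing to $\F\cup(-\F)$, on which $\sup_f|\cdot| = \sup_f(\cdot)$ and the Lipschitz property of $\rho$ is preserved (after also including $x\mapsto \rho(-x)$ in the composed class, which has the same Lipschitz constant); alternatively, one can split the two sign cases inside $|\cdot|$ and run the Lipschitz bound on each. A minor discrepancy is that the paper writes equality while the contraction argument only delivers $\le$; since Theorem~\ref{RCBound} only uses the $\le$ direction, this is harmless, and I would simply state and prove the inequality.
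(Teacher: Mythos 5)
Your argument is correct, but note that the paper does not actually prove this lemma: it is quoted as a known background fact (the Ledoux--Talagrand contraction principle, in the form standard in the Rademacher-complexity literature), and the appendix simply states it before invoking it in Corollary~\ref{reluRC} and Theorem~\ref{RCBound}. Your peeling argument --- conditioning on all signs except $\sigma_k$, rewriting the average over $\sigma_k=\pm 1$ as a joint supremum over pairs $(f,f')$, using the $(f,f')$-symmetry to fix the sign of $\rho(f(x_k))-\rho(f'(x_k))$ and apply Lipschitzness, then re-splitting the joint supremum --- is exactly the standard proof of that cited result, so you have supplied the missing justification rather than found an alternative route. Two of your side remarks deserve emphasis. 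First, you are right that the statement as printed, $\eR(\rho\circ\F) = L_\rho\,\eR(\F)$, is an overclaim: contraction yields only $\eR(\rho\circ\F) \le L_\rho\,\eR(\F)$ (equality already fails for $\rho=\relu$ on a class of nonpositive-valued functions, where the left side is zero), and since only the inequality is used downstream, the lemma and Corollary~\ref{reluRC} should both read ``$\leq$''. Second, your handling of the absolute value in the paper's definition of $\eR$ (symmetrizing to $\F\cup(-\F)$, or splitting the two sign cases) is genuinely needed, since the clean peeling identity holds for $\sup_f(\cdot)$ rather than $\sup_f|\cdot|$; this is a point often glossed over. One small caution on wording: ``accumulating one factor of $L_\rho$ per step'' should be read as each coordinate independently acquiring the same single factor $L_\rho$, which is then pulled out of the sum once at the end --- not as a product of $L$ such factors.
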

\begin{proposition}\label{networkbound}
\emph{(Network Layer Bound)} Let $\mathcal{G}$ be the class of real functions $\R^{\din}\to\R$ with input dimension $\F$, that is, $\mathcal{G} = [\F]_{j=1}^{\din}$ and $\mathcal{H}$ is a linear transform function parametrized by $W$ with $\|W\|_2\leq B$, then $\eR(\mathcal{H}\circ\mathcal{G})\leq\sqrt{\din} B \eR(\F)$. \cite{wan2013regularization}
\end{proposition}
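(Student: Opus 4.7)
The plan is to reduce the stated bound to Proposition \ref{networkbound} (the corresponding bound without $\rho_c$) by algebraically recasting $\mathcal{H}\circ\rho_c$ as a standard linear layer acting on a $2\din$-dimensional input assembled from $\mathcal{G}$ and $|\mathcal{G}|$. Intuitively, $\rho_c$ only reorganizes each coordinate into its signed and magnitude components, so the associated complexity should match the $\relu$ case up to a judicious rebalancing of constants between the effective weight norm and the effective input dimension.

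The first step is to split the weight $W = [W^+\;W^-]$ with $W^\pm \in \R^{1 \times \din}$ and use the identities $[t]_+ = \tfrac{1}{2}(t + |t|)$ and $[-t]_+ = \tfrac{1}{2}(|t| - t)$ coordinate-wise to rewrite
\[\mathcal{H}(\rho_c(z)) = A \cdot z + \tilde B \cdot |z|,\qquad A = \tfrac{1}{2}(W^+ - W^-),\; \tilde B = \tfrac{1}{2}(W^+ + W^-),\]
where $|z|$ denotes the coordinate-wise absolute value. A parallelogram-law calculation yields $\|A\|_2^2 + \|\tilde B\|_2^2 = \tfrac{1}{2}\|W\|_2^2 \leq B^2/2$. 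Hence $\mathcal{H}\circ\rho_c\circ\mathcal{G}$ is exactly the class of linear functionals of the extended input $\mathcal{G}'(x) := (\mathcal{G}(x), |\mathcal{G}(x)|) \in \R^{2\din}$ with weight vector $(A, \tilde B)$ of Euclidean norm at most $B/\sqrt{2}$.

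Each coordinate of $\mathcal{G}'$ lies in the enlarged class $\tilde\F := \F \cup (|\cdot|\circ\F)$, so $\mathcal{G}' \in \tilde\F^{\,2\din}$. Applying Proposition \ref{networkbound} with input dimension $2\din$ and weight-norm bound $B/\sqrt{2}$ yields
\[\eR(\mathcal{H}\circ\rho_c\circ\mathcal{G}) \leq \sqrt{2\din}\cdot\tfrac{B}{\sqrt{2}}\cdot\eR(\tilde\F) = \sqrt{\din}\,B\,\eR(\tilde\F).\]
It remains to argue that $\eR(\tilde\F) \leq \eR(\F)$. Lemma \ref{comp} applied to the $1$-Lipschitz map $|\cdot|$ gives $\eR(|\cdot|\circ\F) \leq \eR(\F)$, and because the Rademacher complexity is defined through $|\tfrac{2}{L}\sum_k \sigma_k g(x_k)|$, the union with $\F$ should be absorbed inside a single supremum without an inflation penalty, completing the chain to $\sqrt{\din}\,B\,\eR(\F)$.

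The main obstacle is this last step. A naive subadditivity bound would give $\eR(\tilde\F) \leq \eR(\F) + \eR(|\cdot|\circ\F) \leq 2\eR(\F)$, costing a spurious factor of two. To avoid it, one has to exploit that for each Rademacher realization the supremum over $\F\cup(|\cdot|\circ\F)$ is attained by either $f$ or $|f|$ for the same $f \in \F$, and then merge the two cases using the contraction lemma pointwise rather than after taking expectations. Everything else—the algebraic decomposition, the norm computation, and the appeal to Proposition \ref{networkbound}—is routine bookkeeping, but this merging is the delicate point on which the factor $\sqrt{\din}$ (rather than $\sqrt{2\din}$) in the stated bound rests.
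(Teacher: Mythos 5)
You have proven the wrong statement. The assigned result is Proposition~\ref{networkbound} itself --- the plain linear-layer bound $\eR(\mathcal{H}\circ\mathcal{G})\leq\sqrt{\din}\,B\,\eR(\F)$, with no $\rho_c$ anywhere --- which the paper imports from \citet{wan2013regularization} without proof; the standard argument is a direct computation, writing $h\circ g(x_i)=\langle W,\ g(x_i)\rangle$, applying Cauchy--Schwarz to get the factor $B$ times the $\ell^2$ norm of the $\din$-dimensional vector of per-coordinate Rademacher sums, and bounding that norm by $\sqrt{\din}$ times the supremum of a single coordinate's sum. Your proposal instead attempts Theorem~\ref{RCBound} (the $\cat$ version) and invokes Proposition~\ref{networkbound} as a black-box lemma, which is circular relative to the task: you cannot use the statement you were asked to prove as your main tool.

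Even read as a proof of Theorem~\ref{RCBound}, the argument has a genuine unresolved gap exactly where you flag it. The inequality $\eR\bigl(\F\cup(|\cdot|\circ\F)\bigr)\leq\eR(\F)$ does not follow from Lemma~\ref{comp}: the empirical Rademacher complexity of a union is the expectation of the maximum of two suprema, which in general strictly exceeds each of them, and your proposed fix --- merging the two cases ``for the same $f$'' --- fails because the supremum over $\F$ and the supremum over $|\cdot|\circ\F$ need not be attained at corresponding elements. The paper's proof of Theorem~\ref{RCBound} sidesteps the $W=[W^+\ W^-]$ decomposition entirely: after Cauchy--Schwarz it uses the identity $\|\rho_c(t)\|_2=\|([t]_+,[-t]_+)\|_2=|t|$ (exactly one component is nonzero), so the $2\din$-dimensional concatenated vector has the same $\ell^2$ norm as the underlying $\din$-dimensional response, and the bound reduces to the $\relu$ case with no extra factor and no enlarged function class. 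If you want to salvage your route, you would need to keep the coupling between the coordinates $g_j$ and $|g_j|$ rather than relaxing to the product class $\tilde\F^{\,2\din}$, at which point you are essentially re-deriving the paper's norm identity.
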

\begin{corollary}\label{reluRC}
By Lemma~\ref{comp}, Proposition~\ref{networkbound}, and the fact that $\relu$ is $1$-Lipschitz, we know that $\eR(\relu \circ\mathcal{G}) =  \eR(\mathcal{G})$ and that $\eR(\mathcal{H}\circ \relu \circ\mathcal{G})\leq\sqrt{\din} B \eR(\F)$.
\end{corollary}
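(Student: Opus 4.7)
The plan is simply to plug into the two results cited in the statement. First, for $\relu(x) = \max(x,0)$ we have $|\relu(x) - \relu(y)| \leq |x-y|$ for all $x,y\in\R$, so $\relu$ is $1$-Lipschitz, i.e. $L_\relu = 1$. Applying Lemma~\ref{comp} with $\rho = \relu$ therefore yields $\eR(\relu\circ\F) = \eR(\F)$, and, since $\mathcal{G} = [\F]_{j=1}^{\din}$ is the product class built coordinatewise from $\F$, the same identity applied in each coordinate gives $\eR(\relu\circ\mathcal{G}) = \eR(\mathcal{G})$. This is the first half of the corollary.

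For the second half, set $\mathcal{G}' \triangleq \relu\circ\mathcal{G} = [\relu\circ\F]_{j=1}^{\din}$. By construction $\mathcal{G}'$ has exactly the same product structure as $\mathcal{G}$, only with the base scalar class $\F$ replaced by the scalar class $\relu\circ\F$. Proposition~\ref{networkbound} applied to the linear head $\mathcal{H}$ (with $\|W\|_2\leq B$) composed with $\mathcal{G}'$ then gives
\[
\eR(\mathcal{H}\circ\relu\circ\mathcal{G}) \;=\; \eR(\mathcal{H}\circ\mathcal{G}') \;\leq\; \sqrt{\din}\,B\,\eR(\relu\circ\F).
\]
Combining this with the coordinatewise identity $\eR(\relu\circ\F) = \eR(\F)$ from the first step gives the claimed bound $\eR(\mathcal{H}\circ\relu\circ\mathcal{G}) \leq \sqrt{\din}\,B\,\eR(\F)$.

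There is essentially no obstacle here since the whole content of the corollary is ``apply the two previously stated facts in the right order''. The only point that benefits from a sentence of care is checking that Lemma~\ref{comp}, which is stated for a scalar class $\F$, legitimately lifts to the vector-valued product class $\mathcal{G} = [\F]_{j=1}^{\din}$ when $\relu$ is applied componentwise; this is immediate from the standard interpretation of the product-class notation, the same one already implicitly used in the statement of Proposition~\ref{networkbound}.
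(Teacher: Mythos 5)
Your proposal is correct and follows exactly the route the paper intends: the corollary is stated in the paper without a separate proof precisely because it is the immediate combination of Lemma~\ref{comp} (with $L_\rho=1$ for $\relu$) and Proposition~\ref{networkbound}, which is what you carry out. Your extra sentence checking that the scalar composition lemma lifts coordinatewise to the product class $\mathcal{G}=[\F]_{j=1}^{\din}$ is a reasonable bit of care that the paper leaves implicit.
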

\begin{theorem}
\ref{RCBound}
Let $\mathcal{G}$ be the class of real functions $\R^{\din}\to \R$ with input dimension $\F$, that is, $\mathcal{G} = [\F]_{j=1}^{\din}$. Let $\mathcal{H}$ be a linear transform function from $\R^{2\din}$ to $\R$, parametrized by $W$, where $\|W\|_2\leq B$. Then $\eR (\mathcal{H}\circ \rho_c \circ \mathcal{G}) \leq \sqrt{\din} B \eR (\F)$.
\end{theorem}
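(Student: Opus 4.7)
The plan is to reduce the $\cat$ case to the $\relu$ case already covered by Corollary~\ref{reluRC}, relying on the algebraic fact that $\rho_c$ followed by a linear map can be re-expressed as $\relu$ followed by a linear map on a ``doubled'' input. Writing $W = (W^+, W^-) \in \R^{2\din}$ with $\|W\|_2 \leq B$, the pointwise identity
\begin{equation*}
    W^{T} \rho_c(y) \;=\; (W^+)^{T} [y]_+ + (W^-)^{T} [-y]_+ \;=\; W^{T}[\tilde{y}]_+,
\end{equation*}
where $\tilde{y} \triangleq (y, -y) \in \R^{2\din}$, shows that every element of $\mathcal{H} \circ \rho_c \circ \mathcal{G}$ equals $\mathcal{H}$ applied to $\relu$ of $\tilde{g}(x) \triangleq (g(x), -g(x))$. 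Hence $\mathcal{H} \circ \rho_c \circ \mathcal{G} = \mathcal{H} \circ \relu \circ \tilde{\mathcal{G}}$ with $\tilde{\mathcal{G}} = \{x \mapsto (g(x), -g(x)) : g \in \mathcal{G}\}$, a class of functions into $\R^{2\din}$.

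Next I would check that $\tilde{\mathcal{G}}$ is contained in a product class whose base has the same Rademacher complexity as $\mathcal{F}$. Each coordinate of any $\tilde{g} \in \tilde{\mathcal{G}}$ lies in $\mathcal{F} \cup (-\mathcal{F})$, and since the definition of $\eR$ involves $|\sum_i \sigma_i f(x_i)|$, negating $f$ leaves the supremum unchanged, giving $\eR(-\mathcal{F}) = \eR(\mathcal{F})$ and hence $\eR(\mathcal{F} \cup (-\mathcal{F})) = \eR(\mathcal{F})$. Applying Corollary~\ref{reluRC} in the $2\din$-dimensional setting to the inclusion $\tilde{\mathcal{G}} \subseteq [\mathcal{F} \cup (-\mathcal{F})]^{2\din}$ yields $\eR(\mathcal{H} \circ \relu \circ \tilde{\mathcal{G}}) \leq \sqrt{2\din}\, B\, \eR(\mathcal{F})$.

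To replace the $\sqrt{2\din}$ by the sharper $\sqrt{\din}$ required by the theorem, I would exploit the coupling within $\tilde{\mathcal{G}}$ (rather than treating its $2\din$ coordinates as independent) via the linear reparametrization $u = (W^+ + W^-)/2$, $v = (W^+ - W^-)/2$, under which
\begin{equation*}
    W^{T} \rho_c(g(x)) \;=\; u^{T} |g(x)| + v^{T} g(x), \qquad \|u\|_2^2 + \|v\|_2^2 \;=\; \tfrac{1}{2}\|W\|_2^2 \;\leq\; B^2/2 .
\end{equation*}
Each summand is now a linear composition on a $\din$-dimensional (not $2\din$-dimensional) product class: Proposition~\ref{networkbound} applies directly to the $v$-term, and via the composition lemma (with $|\cdot|$ being $1$-Lipschitz, so that $\eR(|\cdot|\circ\mathcal{F}) = \eR(\mathcal{F})$) to the $u$-term. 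Combining the two bounds through a single Cauchy--Schwarz step against $(\|u\|_2, \|v\|_2)$, the tighter norm constraint $\|(u,v)\|_2 \leq B/\sqrt{2}$ cancels the $\sqrt{2}$ gained from splitting, leaving $\sqrt{\din}\, B\, \eR(\mathcal{F})$.

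The hard part will be the final constant bookkeeping: both the ``doubled-dimension'' route and the ``reparametrization'' route produce a bound that is a constant factor away from the target, and reconciling them to the clean $\sqrt{\din}\, B\, \eR(\mathcal{F})$ requires choosing the right order of Cauchy--Schwarz and dimension accounting. The remaining ingredients---the algebraic identity for $W^{T}\rho_c$, the invariance of $\eR$ under negation, the product-class inclusion, Proposition~\ref{networkbound}, and the composition lemma---are routine once those preliminaries are assembled.
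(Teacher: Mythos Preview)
Your route is quite different from the paper's, and the second half has a gap at exactly the step you yourself flag as hard.

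The paper's argument is a single short chain with no detour through Corollary~\ref{reluRC} and no $(u,v)$ reparametrization: write $h\circ\rho_c\circ g(x)=\langle W,\rho_c(g(x))\rangle$, apply Cauchy--Schwarz once to pull out $\|W\|_2\leq B$, and then pass in one stroke from the $2\din$-dimensional quantity $\bigl\|[\tfrac{2}{L}\sum_i\sigma_i\,\rho_c\!\circ f^j(x_i)]_{j}\bigr\|_2$ to the $\din$-dimensional one $\bigl\|[\tfrac{2}{L}\sum_i\sigma_i f^j(x_i)]_{j}\bigr\|_2$, citing only ``the definition of $\cat$ and $\ell^2$ norm'' (i.e.\ the scalar identity $[t]_+^2+[-t]_+^2=t^2$). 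After that, a $\sqrt{\din}$ dimension count and the definition of $\eR(\F)$ finish the proof. There is no splitting into two summands and no appeal to Proposition~\ref{networkbound} as a black box.

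Your reparametrization does not recover the stated constant. If you invoke Proposition~\ref{networkbound} \emph{separately} on the $u^T|g|$ and $v^Tg$ pieces and then add, you have implicitly used subadditivity $\eR(A+B)\leq\eR(A)+\eR(B)$, which \emph{decouples} $u$ from $v$: inside $\eR(\{u^T|g|\})$ the only constraint left on $u$ is $\|u\|\leq B/\sqrt2$ (take $v=0$), and likewise for $v$. The sum is therefore at most $\sqrt{\din}\cdot 2\cdot(B/\sqrt2)\,\eR(\F)=\sqrt{2\din}\,B\,\eR(\F)$, no better than your first route. The inequality $\|u\|+\|v\|\leq\sqrt2\,\|(u,v)\|_2\leq B$ that you want to exploit holds for each \emph{fixed} admissible pair $(u,v)$, but after the subadditivity split the two suprema range independently and can each hit $B/\sqrt2$. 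If instead you apply Cauchy--Schwarz to $(u,v)$ jointly \emph{before} splitting, you face $\tfrac{B}{\sqrt2}\,\Expect_\sigma\sup_g\bigl\|(\sum_i\sigma_i|g(x_i)|,\sum_i\sigma_i g(x_i))\bigr\|_2$, and bounding $\Expect_\sigma\sup_f\sqrt{A(f)^2+B(f)^2}$ via the contraction lemma alone yields $2\,\eR(\F)$ rather than $\sqrt2\,\eR(\F)$, again leaving you at $\sqrt{2\din}$. In short, your orthogonal change of variables from $(W^+,W^-)$ to $(u,v)$ is just a rotation of the same $\ell^2$ norm and cannot by itself manufacture the missing $\sqrt2$; the paper's argument instead relies on collapsing $\rho_c$ to the identity at the norm level in a single step.
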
 
Recall from Definition~\ref{crelu}, $\rho_c$ is the $\cat$ formulation. 
\begin{proof}
\begin{align}
&\eR(\mathcal{H}\circ \rho_c \circ \mathcal{G}) = \Expect_\sigma \left[\sup_{h\in\mathcal{H}, g\in\mathcal{G}}|\frac{2}{L}\sum_{i=1}^L \sigma_i h\circ \rho_c \circ g(x_i)|\right]\\
&=\Expect_\sigma \left[ \sup_{\|W\|\leq B, g\in\mathcal{G}}|\langle W, \frac{2}{L}\sum_{i=1}^{L}\sigma_i  \rho_c\circ g(x_i) \rangle |\right]\\
& \leq B \Expect_\sigma \left[ \sup_{ f\in\F}\|\left[ \frac{2}{L}\sum_{i=1}^{L}\sigma^j_i  \rho_c\circ f^j(x_i)\right]_{j=1}^{\din} \|_2\right]\\
& = B \Expect_\sigma \left[ \sup_{ f\in\F}\|\left[ \frac{2}{L}\sum_{i=1}^{L}\sigma^j_i  f^j(x_i)\right]_{j=1}^{\din} \|_2\right]\\
&= B\sqrt{\din} \Expect_\sigma \left[ \sup_{f\in\F} |\frac{2}{L}\sum_{i=1}^L \sigma_i f(x_i)|\right]\\
& = \sqrt{\din} B \eR(\F). 
\end{align}
From (S1) to (S2), use the definition of linear transformation and inner product. 
From (S2) to (S3), use Cauchy-Schwarz inequality and the assumption that $\|W\|_2\leq B$.
From (S3) to (S4), use the definition of $\cat$ and $l^2$ norm.
From (S4) to (S5), use the definition of $l^2$ norm and $\sup$ operator.
From (S5) to (S6), use the definition of $\eR$
\end{proof}
We see that $\cat$ followed by linear transformation reaches the same Rademacher complexity bound as $\relu$ followed by linear transformation with the same input dimension.
\section{Reconstruction Ratio}\label{ReconRatio}
\begin{table}[t]
\caption{\textbf{Empirical mean of the reconstruction ratios.}
Reconstruct the sampled images from test set using the features after $\cat$ and max-pooling; then calculate the reconstruction ratio, $\|x-x'\|_2/\|x\|_2$. \label{tab:reconstructionration}}
\vspace{0.05in}
\centering
\small
\begin{tabular}{c|c|c} 
\hline
\multicolumn{3}{c}{CIFAR-10}\\ \hline
layer & learned & random  \\ \hline
conv2 & 0.92 {\scriptsize$\pm 0.0002$}  & 0.99 {\scriptsize$\pm 0.00005$}   \\ \hline
conv5& 0.96 {\scriptsize$\pm 0.0003$} & 0.99 {\scriptsize$\pm 0.00005$}  \\ \hline 
\hline
\multicolumn{3}{c}{CIFAR-100}\\ \hline
layer & learned & random   \\ \hline
conv2 & 0.93 {\scriptsize$\pm 0.0002$}& 0.99 {\scriptsize$\pm 0.00005$}   \\ \hline
conv5& 0.96 {\scriptsize$\pm 0.0001$}& 0.99 {\scriptsize$\pm 0.00005$}  \\ \hline 
\end{tabular}
\vspace{-0.15in}
\end{table}
Recall that Theorem~\ref{recon_error} characterizes the reconstruction property when max-pooling is added after $\cat$.
%
As an example, we study the all-conv $\cat$ (half) models used for CIFAR-10/100 experiments.
%
In this model, conv2 and conv5 layers are followed by max-pooling.
CIFAR images are much less diverse than those from ImageNet.
Instead of directly inverting features all the way back to the original images, we empirically calculate the reconstruction ratio, $\|x-x'\|_2/\|x\|_2$.
We sample testing examples, extract pooled features after conv2(conv5) layer and reconstruct features from the previous layer via Algorithm~\ref{Recon_2}. 
%
%
%
To compare, we perform the same procedures on random convolution filters\footnote{Each entry is sampled from standard normal distribution.}.
%
Essentially, convolution imposes structured zeros to the random $\widetilde{W}_x$; there has not been published results on random subspace projection with such structured zeros. 
In a simplified setting without structured zeros, 
i.e. no convolution, it is straightforward to show that the expected reconstruction ratio is $\sqrt{\frac{D-K}{D}}$ (Theorem~\ref{random_subspace}), where, in our case, $D=48(96)\times 5\times 5$ and $K = 48(96)$ for conv2(conv5) layer.
%
%
%
Table~\ref{tab:reconstructionration} compares between the empirical mean of reconstruction ratios using learned filters and random filters:
random filters only recover $1\%$ of the original input, whereas the learned filters span more of the input domain. 
\begin{theorem}\label{random_subspace} Let $x\in \R^{D}$, and let $x_s\in \R^{D}$ be its projection onto a random subspace of dimension $D_2$, then
\[
\Expect \left [\frac{\|x_s\|_2}{\|x\|_2}\right ] = \sqrt{\frac{D_s}{D}}
\]
\end{theorem}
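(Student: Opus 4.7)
The plan is to exploit rotational invariance to reduce to a concrete coordinate computation, and then invoke concentration of measure to address the square root in the statement. First, by homogeneity the random variable $\|x_s\|_2/\|x\|_2$ is invariant under rescaling $x$, so without loss of generality I take $\|x\|_2 = 1$. Since the distribution of a uniformly random $D_s$-dimensional subspace $S \subset \R^D$ is invariant under the orthogonal group $O(D)$, the law of $\|x_s\|_2$ is unchanged if I instead fix $S = \mathrm{span}(e_1, \ldots, e_{D_s})$ and randomize $x$ uniformly on the sphere $S^{D-1}$. Under this dual formulation, $\|x_s\|_2^2 = \sum_{i=1}^{D_s} x_i^2$.

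Next I would compute the second moment exactly. Representing the uniform measure on the sphere via the Gaussian quotient $x = g/\|g\|_2$ with $g \sim \mathcal{N}(0, I_D)$, coordinate symmetry gives $\Expect[x_i^2] = 1/D$ for each $i$, and summing over $i = 1, \ldots, D_s$ yields the clean identity
\[
\Expect\!\left[\frac{\|x_s\|_2^2}{\|x\|_2^2}\right] = \frac{D_s}{D}.
\]
This is the rigorous square of the claim and is already the form actually needed for the empirical reconstruction-ratio comparison in Table~\ref{tab:reconstructionration}.

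To pass from the second moment to the stated first-moment equality, I would appeal to concentration of measure on the sphere. The map $x \mapsto \|x_s\|_2$ is $1$-Lipschitz on $S^{D-1}$, so Levy's spherical concentration inequality gives sub-Gaussian fluctuations of order $1/\sqrt{D}$ around the median, and hence
\[
\Expect\bigl[\|x_s\|_2\bigr] = \sqrt{D_s/D} + O(1/\sqrt{D}).
\]
This justifies reading the theorem's equality as a concentration-based approximation that becomes sharp as $D$ grows, which is exactly the regime of interest since $D = 48 \times 25$ or $96 \times 25$ in the experiments above.

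The main obstacle is that the stated equality is not literally an identity: Jensen's inequality gives only $\Expect[\|x_s\|_2] \le \sqrt{\Expect[\|x_s\|_2^2]} = \sqrt{D_s/D}$, with strict inequality whenever $\|x_s\|_2$ is non-degenerate. The cleanest resolution is to prove the exact second-moment identity and explicitly interpret the theorem's square-root statement as the concentration approximation; trying to force an exact first-moment equality would be false. All the qualitative and numerical conclusions drawn from the theorem (namely that random filters give reconstruction ratio close to $\sqrt{(D-K)/D}$) survive unchanged under this interpretation.
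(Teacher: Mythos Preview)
Your core argument---normalize to $\|x\|_2=1$, use rotational invariance to swap the random subspace for a random point on the sphere projected onto the first $D_s$ coordinates, and then use coordinate symmetry to get $\Expect[\|x_s\|_2^2]=D_s/D$---is exactly the paper's proof, line for line.

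Where you diverge is in the final step. The paper simply writes $\Expect[\|z_s\|_2/\|z\|_2]=\sqrt{D_s/D}$ immediately after computing the second moment, i.e.\ it silently takes the square root through the expectation. You are right that this is not an identity: by Jensen one has only $\Expect[\|x_s\|_2]\le\sqrt{D_s/D}$, with strict inequality for $0<D_s<D$. Your added concentration step (Levy's inequality for the $1$-Lipschitz map $x\mapsto\|x_s\|_2$ on $S^{D-1}$) is the honest way to salvage the statement as an approximation with $O(1/\sqrt{D})$ error, and this interpretation is indeed all that the downstream application to Table~\ref{tab:reconstructionration} requires. So your proof follows the paper's route but is more careful about a step the paper elides; the theorem as stated with an equals sign is not literally correct, and you have identified and repaired this.
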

\begin{proof}
Without loss of generality, let $\|x\|_2=1$.
Projecting a fixed $x$ onto a random subspace of dimension $D_s$ is equivalent of projecting a random unit-norm vector $z = (z_1,z_2, \cdots , z_D)^T$ onto a fixed subspace of dimension $D_s$ thanks to the rotational invariance of inner product. 
Without loss of generality, assume the fixed subspace here is spanned by the first $D_s$ standard basis covering the first $D_2$ coordinates of $z$. 
Then the resulting projection is $z_s = (z_1,z_2, \cdots , z_{D_s}, 0, \cdots, 0)$.

Because $z$ is unit norm, we have
\[
\Expect \left[ \|z\|^2_2 \right] = \Expect \left[ \sum_{i=1}^D z^2_i \right] = 1.
\]
Because each entry of $z$, $z_i$, is identically distributed, we have
\[
\Expect \left[\|z_s\|^2_2 \right] =  \Expect \left[ \sum_{i=1}^{D_s} z^2_i \right] = \frac{D_s}{D}.
\]
Together we have
\[
\Expect \left [\frac{\|x_s\|_2}{\|x\|_2}\right ] = \Expect \left [\frac{\|z_s\|_2}{\|z\|_2}\right ] =\sqrt{\frac{D_s}{D}}.
\]
\end{proof}
\section{Invariance Score}\label{invar_score}
We use consistent terminology employed by \citet{goodfellow2009measuring} to illustrate the calculation of the invariance scores.

For CIFAR-10/100, we utilize all 50k testing images to calculate the invariance scores; for ImageNet, we take the center crop from 5k randomly sampled validation images

For each individual filter, we calculate its own \emph{firing} threshold, such that it is fired one percent of the time, i.e. the \emph{global firing rate} is $0.01$. 
For $\relu$ models, we zero out all the negative negative responses when calculating the threshold; for $\cat$ models, we take the absolute value. 

To build the set of semantically similar stimuli for each testing image $x$, we apply horizontal flip, 15 degree rotation and translation.
For CIFAR-10/100, translation is composed of horizontal/vertical shifts by 3 pixels; for ImageNet, translation is composed of cropping from the 4 corners. 

Because our setup is convolutional, we consider a filter to be fired only if both the transformed stimulus and the original testing example fire the same convolution filter at the \emph{same} spatial location.

At the end, for each convolution layer, we average the invariance scores of all the filters at this layer to form the final score. 

\section{Implementation Details on ImageNet Models}
\label{sec:detail-training}
The networks from Table~\ref{allconv},~\ref{allconc1},~\ref{allconc2},and~\ref{allconc3}, where the number of convolution filters after $\cat$ are kept the same, are optimized using SGD with mini-batch size of $64$ examples and fixed momentum $0.9$. The learning rate and weight decay is adapted using the following schedule: epoch 1-10, $1\mathrm{e}{-2}$ and $5\mathrm{e}{-4}$; epoch 11-20, $1\mathrm{e}{-3}$ and $5\mathrm{e}{-4}$; epoch 21-25, $1\mathrm{e}{-4}$ and $5\mathrm{e}{-4}$; epoch 26-30, $5\mathrm{e}{-5}$ and $0$; epoch 31-35, $1\mathrm{e}{-5}$ and $0$; epoch 36-40, $5 \mathrm{e}{-6}$ and $0$; epoch 41-45, $1 \mathrm{e}{-6}$ and $0$. 

The networks from Table~\ref{allconch} and~\ref{allconchonly}, where the number of convolution filters after $\cat$ are reduced by half, are optimized using Adam with an initial learning rate $0.0002$ and mini-batch size of $64$ examples for $100$ epochs.

\onecolumn
\section{Details of Network Architecture}
\label{sec:model}

\begin{table*}[htbp]
\caption{(Left) Baseline or $\abs$ and (right) baseline (double) models used for CIFAR-10/100 experiment. ``avg'' refers average pooling.}\label{conpo}
\vspace{0.05in}\centering
{
\begin{tabular}{c|c|c!{\vrule width 1pt}c|c}
\hline
 & \multicolumn{2}{c!{\vrule width 1pt}}{Baseline/$\abs$} & \multicolumn{2}{c}{Baseline (double)}\\ \hline
Layer& kernel, stride, padding & activation & kernel, stride, padding & activation\\
\hline
conv1 & 3$\times$3$\times$3$\times$96, 1, 1 & $\relu$/$\abs$ & 3$\times$3$\times$3$\times$192, 1, 1 & $\relu$\\
\hline
conv2 & 3$\times$3$\times$96$\times$96, 1, 1 & $\relu$/$\abs$ & 3$\times$3$\times$192$\times$192, 1, 1 & $\relu$\\
\hline
pool1 & 3$\times$3, 2, 0 & max & 3$\times$3, 2, 0 & max\\
\hline
conv3 & 3$\times$3$\times$96$\times$192, 1, 1 & $\relu$/$\abs$ & 3$\times$3$\times$192$\times$384, 1, 1 & $\relu$\\
\hline
conv4 & 3$\times$3$\times$192$\times$192, 1, 1 & $\relu$/$\abs$ & 3$\times$3$\times$384$\times$384, 1, 1 & $\relu$\\
\hline
conv5 & 3$\times$3$\times$192$\times$192, 1, 1 & $\relu$/$\abs$ & 3$\times$3$\times$384$\times$384, 1, 1 & $\relu$\\
\hline
pool2 & 3$\times$3, 2, 0 & max & 3$\times$3, 2, 0 & max\\
\hline
conv6 & 3$\times$3$\times$192$\times$192, 1, 1 & $\relu$/$\abs$ & 3$\times$3$\times$384$\times$384, 1, 1 & $\relu$\\
\hline
conv7 & 1$\times$1$\times$192$\times$192, 1, 1 & $\relu$/$\abs$ & 1$\times$1$\times$384$\times$384, 1, 1 & $\relu$\\
\hline
conv8 & 1$\times$1$\times$192$\times$10/100, 1, 0 & $\relu$/$\abs$ & 1$\times$1$\times$384$\times$10/100, 1, 0 & $\relu$\\
\hline
pool3 & 10$\times$10 (100 for CIFAR-100) & avg & 10$\times$10 (100 for CIFAR-100) & avg\\
\hline
\hline
\end{tabular}
}
\end{table*}

\begin{table*}[htbp]
\caption{(Left) $\cat$ and (right) $\cat$ (half) models used for CIFAR-10/100 experiment.}\label{copoc}
\vspace{0.05in}\centering
{
\begin{tabular}{c|c|c!{\vrule width 1pt}c|c}
\hline
 & \multicolumn{2}{c!{\vrule width 1pt}}{$\cat$} & \multicolumn{2}{c}{$\cat$ (half)}\\ \hline
Layer& kernel, stride, padding & activation & kernel, stride, padding & activation\\
\hline
conv1 & 3$\times$3$\times$3$\times$96, 1, 1 & $\cat$ & 3$\times$3$\times$3$\times$48, 1, 1 & $\cat$\\
\hline
conv2 & 3$\times$3$\times$192$\times$96, 1, 1 & $\cat$ & 3$\times$3$\times$96$\times$48, 1, 1 & $\cat$\\
\hline
pool1 & 3$\times$3, 2, 0 & max & 3$\times$3, 2, 0 & max\\
\hline
conv3 & 3$\times$3$\times$192$\times$192, 1, 1 & $\cat$ & 3$\times$3$\times$96$\times$48, 1, 1 & $\cat$\\
\hline
conv4 & 3$\times$3$\times$384$\times$192, 1, 1 & $\cat$ & 3$\times$3$\times$96$\times$96, 1, 1 & $\cat$\\
\hline
conv5 & 3$\times$3$\times$384$\times$192, 1, 1 & $\cat$ & 3$\times$3$\times$192$\times$96, 1, 1 & $\cat$\\
\hline
pool2 & 3$\times$3, 2, 0 & max & 3$\times$3, 2, 0 & max\\
\hline
conv6 & 3$\times$3$\times$384$\times$192, 1, 1 & $\cat$ & 3$\times$3$\times$192$\times$96, 1, 1 & $\cat$\\
\hline
conv7 & 1$\times$1$\times$384$\times$192, 1, 1 & $\cat$ & 1$\times$1$\times$192$\times$96, 1, 1 & $\cat$\\
\hline
conv8 & 1$\times$1$\times$384$\times$10/100, 1, 0 & $\relu$ & 1$\times$1$\times$192$\times$10/100, 1, 0 & $\relu$\\
\hline
pool3 & 10$\times$10 (100 for CIFAR-100) & avg & 10$\times$10 (100 for CIFAR-100) & avg\\
\hline
\hline
\end{tabular}
}
\end{table*}

\twocolumn

\newpage
\begin{table}[t]
\caption{VGG for CIFAR-10/100}\label{vgg}
\vspace{0.05in}\centering
{
\begin{tabular}{c|c|c}
\hline
Layer& kernel, stride, padding & activation\\
\hline
conv1 & 3$\times$3$\times$3$\times$64, 1, 1 & BN+$\relu$ \\
\hline
   & dropout with ratio $0.3$ & \\
\hline
conv2 & 3$\times$3$\times$64$\times$64, 1, 1 & BN+$\relu$\\
\hline
pool1 & 2$\times$2, 2, 0 & \\
\hline
conv3 & 3$\times$3$\times$64$\times$128, 1, 1 & BN+$\relu$ \\
\hline
    & dropout with ratio $0.4$ & \\
\hline
conv4 & 3$\times$3$\times$128$\times$128, 1, 1 & BN+$\relu$\\
\hline
pool2 & 2$\times$2, 2, 0 & \\
\hline
conv5 & 3$\times$3$\times$128$\times$256, 1, 1 & BN+$\relu$ \\
\hline
    & dropout with ratio $0.4$ & \\
\hline
conv6 & 3$\times$3$\times$256$\times$256, 1, 1 & BN+$\relu$\\
\hline
    & dropout with ratio $0.4$ & \\
\hline
conv7 & 3$\times$3$\times$256$\times$256, 1, 1 & BN+$\relu$\\
\hline
pool3 & 2$\times$2, 2, 0 & \\
\hline
conv8 & 3$\times$3$\times$256$\times$512, 1, 1 & BN+$\relu$ \\
\hline
    & dropout with ratio $0.4$ & \\
\hline
conv9 & 3$\times$3$\times$512$\times$512, 1, 1 & BN+$\relu$\\
\hline
    & dropout with ratio $0.4$ & \\
\hline
conv10 & 3$\times$3$\times$512$\times$512, 1, 1 & BN+$\relu$\\
\hline
pool4 & 2$\times$2, 2, 0 & \\
\hline
conv11 & 3$\times$3$\times$512$\times$512, 1, 1 & BN+$\relu$ \\
\hline
    & dropout with ratio $0.4$ & \\
\hline
conv12 & 3$\times$3$\times$512$\times$512, 1, 1 & BN+$\relu$\\
\hline
    & dropout with ratio $0.4$ & \\
\hline
conv13 & 3$\times$3$\times$512$\times$512, 1, 1 & BN+$\relu$\\
\hline
pool5 & 2$\times$2, 2, 0 & \\
\hline
    & dropout with ratio $0.5$ & \\
\hline
fc14 & 512$\times$512 & BN+$\relu$\\
\hline
    & dropout with ratio $0.5$ & \\
\hline
fc15 & 512$\times$10/100 & \\
\hline
\hline
\end{tabular}
}
\end{table}

\begin{table}[htbp]
\caption{VGG + (conv1) for CIFAR-10/100}\label{vgg1}
\vspace{0.05in}\centering
{
\begin{tabular}{c|c|c}
\hline
Layer& kernel, stride, padding & activation\\
\hline
conv1 & 3$\times$3$\times$3$\times$32, 1, 1 & $\cat$ \\
\hline
   & dropout with ratio $0.1$ & \\
\hline
conv2& $\cdots$ & \\
\hline
\hline
\end{tabular}
}\vspace{0.2in}
\caption{VGG + (conv1, 3) for CIFAR-10/100}\label{vgg3}
\vspace{0.05in}\centering
{
\begin{tabular}{c|c|c}
\hline
Layer& kernel, stride, padding & activation\\
\hline
conv1 & 3$\times$3$\times$3$\times$32, 1, 1 & $\cat$ \\
\hline
   & dropout with ratio $0.1$ & \\
\hline
conv2 & 3$\times$3$\times$64$\times$64, 1, 1 & BN+$\relu$\\
\hline
pool1 & 2$\times$2, 2, 0 & \\
\hline
conv3 & 3$\times$3$\times$64$\times$64, 1, 1 & $\cat$ \\
\hline
    & dropout with ratio $0.2$ & \\
\hline
conv4 & $\cdots$ & \\
\hline
\hline
\end{tabular}
}\vspace{0.2in}
\caption{VGG + (conv1, 3, 5) for CIFAR-10/100}\label{vgg5}
\vspace{0.05in}\centering
{
\begin{tabular}{c|c|c}
\hline
Layer& kernel, stride, padding & activation\\
\hline
conv1 & 3$\times$3$\times$3$\times$32, 1, 1 & $\cat$ \\
\hline
   & dropout with ratio $0.1$ & \\
\hline
conv2 & 3$\times$3$\times$64$\times$64, 1, 1 & BN+$\relu$\\
\hline
pool1 & 2$\times$2, 2, 0 & \\
\hline
conv3 & 3$\times$3$\times$64$\times$64, 1, 1 & $\cat$ \\
\hline
    & dropout with ratio $0.2$ & \\
\hline
conv4 & 3$\times$3$\times$128$\times$128, 1, 1 & BN+$\relu$\\
\hline
pool2 & 2$\times$2, 2, 0 & \\
\hline
conv5 & 3$\times$3$\times$128$\times$128, 1, 1 & $\cat$ \\
\hline
    & dropout with ratio $0.2$ & \\
\hline
conv6 & 3$\times$3$\times$256$\times$256, 1, 1 & BN+$\relu$\\
\hline
    & dropout with ratio $0.2$ & \\
\hline
conv7 & $\cdots$ & \\
\hline
\hline
\end{tabular}
}
\end{table}

\begin{table}[htbp]
\caption{Baseline for ImageNet}\label{allconv}
\vspace{0.05in}\centering
{
\begin{tabular}{c|c|c}
\hline
Layer & kernel, stride, padding & activation\\
\hline
conv1 & 11$\times$11$\times$3$\times$96, 4,0 & $\relu$\\
\hline
conv2 & 1$\times$1$\times$96$\times$96, 1,0 & $\relu$\\
\hline
conv3 & 3$\times$3$\times$96$\times$96, 2,0 & $\relu$\\
\hline
conv4 & 5$\times$5$\times$96$\times$256, 1, 2 & $\relu$\\
\hline
conv5 & 1$\times$1$\times$256$\times$256, 1,0 & $\relu$\\
\hline
conv6 & 3$\times$3$\times$256$\times$256, 2,0 & $\relu$\\
\hline
conv7 & 3$\times$3$\times$256$\times$384, 1 ,1& $\relu$\\
\hline
conv8 & 1$\times$1$\times$384$\times$384, 1,0 & $\relu$\\
\hline
conv9 & 3$\times$3$\times$384$\times$384, 2,1 & $\relu$\\
\hline
    &  no dropout & \\
\hline
conv10 & 3$\times$3$\times$384$\times$1024, 1,1 & $\relu$\\
\hline
conv11 & 1$\times$1$\times$1024$\times$1024, 1,0 & $\relu$\\
\hline
conv12 & 1$\times$1$\times$1024$\times$1000, 1 & $\relu$\\
\hline
pool & 6$\times$6 average-pooling & \\
\hline
\hline
\end{tabular}
}\vspace{0.1in}
\caption{$\cat$/$\abs$ (conv1-4) for ImageNet}\label{allconc1}
\vspace{0.05in}\centering
{
\begin{tabular}{c|c|c}
\hline
Layer & kernel, stride, padding & activation\\
\hline
conv1 & 11$\times$11$\times$3$\times$96, 4,0 & $\cat$/$\abs$\\
\hline
conv2 & 1$\times$1$\times$192/96$\times$96, 1,0 & $\cat$/$\abs$\\
\hline
conv3 & 3$\times$3$\times$192/96$\times$96, 2,0 & $\cat$/$\abs$\\
\hline
conv4 & 5$\times$5$\times$192/96$\times$256, 1, 2 & $\cat$/$\abs$\\
\hline
conv5 & 1$\times$1$\times$512/256$\times$256, 1,0 & $\relu$\\
\hline
conv6 & 3$\times$3$\times$256$\times$256, 2,0 & $\relu$\\
\hline
conv7 & 3$\times$3$\times$256$\times$384, 1 ,1& $\relu$\\
\hline
conv8 & 1$\times$1$\times$384$\times$384, 1,0 & $\relu$\\
\hline
conv9 & 3$\times$3$\times$384$\times$384, 2,1 & $\relu$\\
\hline
    & no dropout & \\
\hline
conv10 & 3$\times$3$\times$384$\times$1024, 1,1 & $\relu$\\
\hline
conv11 & 1$\times$1$\times$1024$\times$1024, 1,0 & $\relu$\\
\hline
conv12 & 1$\times$1$\times$1024$\times$1000, 1 & $\relu$\\
\hline
pool & 6$\times$6 average-pooling & \\
\hline
\hline
\end{tabular}
}\vspace{0.1in}
\caption{$\cat$/$\abs$ (conv1-7) for ImageNet}\label{allconc2}
\vspace{0.05in}\centering
{
\begin{tabular}{c|c|c}
\hline
Layer & kernel, stride, padding & activation\\
\hline
conv1 & 11$\times$11$\times$3$\times$96, 4,0 & $\cat$/$\abs$\\
\hline
conv2 & 1$\times$1$\times$192/96$\times$96, 1,0 & $\cat$/$\abs$\\
\hline
conv3 & 3$\times$3$\times$192/96$\times$96, 2,0 & $\cat$/$\abs$\\
\hline
conv4 & 5$\times$5$\times$192/96$\times$256, 1, 2 & $\cat$/$\abs$\\
\hline
conv5 & 1$\times$1$\times$512/256$\times$256, 1,0 & $\cat$/$\abs$\\
\hline
conv6 & 3$\times$3$\times$512/256$\times$256, 2,0 & $\cat$/$\abs$\\
\hline
conv7 & 3$\times$3$\times$512/256$\times$384, 1 ,1& $\cat$/$\abs$\\
\hline
conv8 & 1$\times$1$\times$768/384$\times$384, 1,0 & $\relu$\\
\hline
conv9 & 3$\times$3$\times$384$\times$384, 2,1 & $\relu$\\
\hline
    & dropout with ratio $0.25$ & \\
\hline
conv10 & 3$\times$3$\times$384$\times$1024, 1,1 & $\relu$\\
\hline
conv11 & 1$\times$1$\times$1024$\times$1024, 1,0 & $\relu$\\
\hline
conv12 & 1$\times$1$\times$1024$\times$1000, 1 & $\relu$\\
\hline
pool & 6$\times$6 average-pooling & \\
\hline
\hline
\end{tabular}
}
\end{table}

\begin{table}[t]
\caption{$\cat$/$\abs$ (conv1-9) for ImageNet}\label{allconc3}
\vspace{0.05in}\centering
{
\begin{tabular}{c|c|c}
\hline
Layer & kernel, stride, padding & activation\\
\hline
conv1 & 11$\times$11$\times$3$\times$96, 4,0 & $\cat$/$\abs$\\
\hline
conv2 & 1$\times$1$\times$192/96$\times$96, 1,0 & $\cat$/$\abs$\\
\hline
conv3 & 3$\times$3$\times$192/96$\times$96, 2,0 & $\cat$/$\abs$\\
\hline
conv4 & 5$\times$5$\times$192/96$\times$256, 1, 2 & $\cat$/$\abs$\\
\hline
conv5 & 1$\times$1$\times$512/256$\times$256, 1,0 & $\cat$/$\abs$\\
\hline
conv6 & 3$\times$3$\times$512/256$\times$256, 2,0 & $\cat$/$\abs$\\
\hline
conv7 & 3$\times$3$\times$512/256$\times$384, 1 ,1& $\cat$/$\abs$\\
\hline
conv8 & 1$\times$1$\times$768/384$\times$384, 1,0 & $\cat$/$\abs$\\
\hline
conv9 & 3$\times$3$\times$768/384$\times$384, 2,1 & $\cat$/$\abs$\\
\hline
    & dropout with ratio $0.25$ & \\
\hline
conv10 & 3$\times$3$\times$768/384$\times$1024, 1,1 & $\relu$\\
\hline
conv11 & 1$\times$1$\times$1024$\times$1024, 1,0 & $\relu$\\
\hline
conv12 & 1$\times$1$\times$1024$\times$1000, 1 & $\relu$\\
\hline
pool & 6$\times$6 average-pooling & \\
\hline
\hline
\end{tabular}
}\vspace{0.2in}
\caption{$\cat$ (all) for ImageNet}\label{allconch}
\vspace{0.05in}\centering
{
\begin{tabular}{c|c|c}
\hline
Layer & kernel, stride, padding & activation\\
\hline
conv1 & 11$\times$11$\times$3$\times$48, 4,0 & $\cat$\\
\hline
conv2 & 1$\times$1$\times$96$\times$48, 1,0 & $\cat$\\
\hline
conv3 & 3$\times$3$\times$96$\times$48, 2,0 & $\cat$\\
\hline
conv4 & 5$\times$5$\times$96$\times$128, 1, 2 & $\cat$\\
\hline
conv5 & 1$\times$1$\times$256$\times$128, 1,0 & $\cat$\\
\hline
conv6 & 3$\times$3$\times$256$\times$128, 2,0 & $\cat$\\
\hline
conv7 & 3$\times$3$\times$256$\times$192, 1 ,1& $\cat$\\
\hline
conv8 & 1$\times$1$\times$384$\times$192, 1,0 & $\cat$\\
\hline
conv9 & 3$\times$3$\times$384$\times$192, 2,1 & $\cat$\\
\hline
    & dropout with ratio $0.25$ & \\
\hline
conv10 & 3$\times$3$\times$384$\times$512, 1,1 & $\cat$\\
\hline
conv11 & 1$\times$1$\times$512$\times$512, 1,0 & $\cat$\\
\hline
conv12 & 1$\times$1$\times$512$\times$1000, 1 & $\cat$\\
\hline
pool & 6$\times$6 average-pooling & \\
\hline
\hline
\end{tabular}
}\vspace{0.2in}
\caption{$\cat$ (conv1,4,7) for ImageNet}\label{allconchonly}
\vspace{0.05in}\centering
{
\begin{tabular}{c|c|c}
\hline
Layer & kernel, stride, padding & activation\\
\hline
conv1 & 11$\times$11$\times$3$\times$48, 4,0 & $\cat$\\
\hline
conv2 & 1$\times$1$\times$96$\times$96, 1,0 & $\relu$\\
\hline
conv3 & 3$\times$3$\times$96$\times$96, 2,0 & $\relu$\\
\hline
conv4 & 5$\times$5$\times$96$\times$128, 1, 2 & $\cat$\\
\hline
conv5 & 1$\times$1$\times$256$\times$256, 1,0 & $\relu$\\
\hline
conv6 & 3$\times$3$\times$256$\times$256, 2,0 & $\relu$\\
\hline
conv7 & 3$\times$3$\times$256$\times$192, 1 ,1& $\cat$\\
\hline
conv8 & 1$\times$1$\times$384$\times$384, 1,0 & $\relu$\\
\hline
conv9 & 3$\times$3$\times$384$\times$384, 2,1 & $\relu$\\
\hline
    & dropout with ratio $0.25$ & \\
\hline
conv10 & 3$\times$3$\times$384$\times$1024, 1,1 & $\relu$\\
\hline
conv11 & 1$\times$1$\times$1024$\times$1024, 1,0 & $\relu$\\
\hline
conv12 & 1$\times$1$\times$1024$\times$1000, 1 & $\relu$\\
\hline
pool & 6$\times$6 average-pooling & \\
\hline
\hline
\end{tabular}
}
\end{table}

\onecolumn
\section{Image Reconstruction}
\label{image_recon}
In this section, we provide more image reconstruction examples.
%
\begin{figure*}[htbp]
\centering
\subfigure[Original image]{\includegraphics[width=0.19\textwidth]{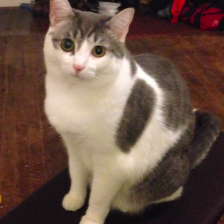}}\hspace{0.02in}
\subfigure[conv1]{\includegraphics[width=0.19\textwidth]{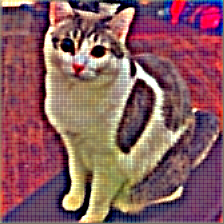}}\hspace{0.02in}
\subfigure[conv2]{\includegraphics[width=0.19\textwidth]{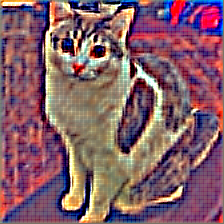}}\hspace{0.02in}
\subfigure[conv3]{\includegraphics[width=0.19\textwidth]{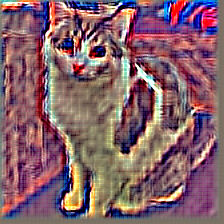}}\hspace{0.02in}
\subfigure[conv4]{\includegraphics[width=0.19\textwidth]{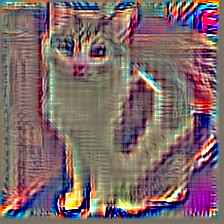}}
\\
\subfigure[Original image]{\includegraphics[width=0.19\textwidth]{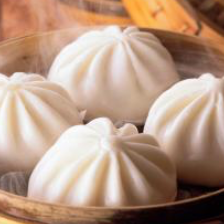}}\hspace{0.02in}
\subfigure[conv1]{\includegraphics[width=0.19\textwidth]{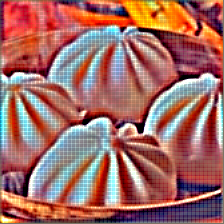}}\hspace{0.02in}
\subfigure[conv2]{\includegraphics[width=0.19\textwidth]{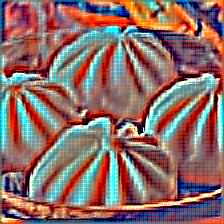}}\hspace{0.02in}
\subfigure[conv3]{\includegraphics[width=0.19\textwidth]{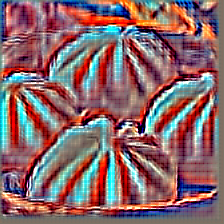}}\hspace{0.02in}
\subfigure[conv4]{\includegraphics[width=0.19\textwidth]{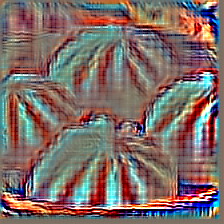}}
\\
\subfigure[Original image]{\includegraphics[width=0.19\textwidth]{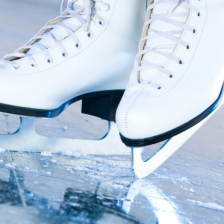}}\hspace{0.02in}
\subfigure[conv1]{\includegraphics[width=0.19\textwidth]{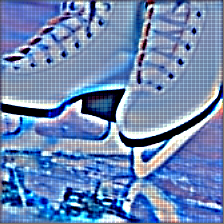}}\hspace{0.02in}
\subfigure[conv2]{\includegraphics[width=0.19\textwidth]{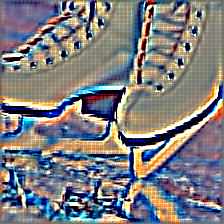}}\hspace{0.02in}
\subfigure[conv3]{\includegraphics[width=0.19\textwidth]{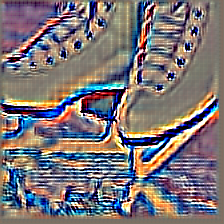}}\hspace{0.02in}
\subfigure[conv4]{\includegraphics[width=0.19\textwidth]{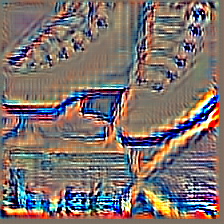}}
\\
\subfigure[Original image]{\includegraphics[width=0.19\textwidth]{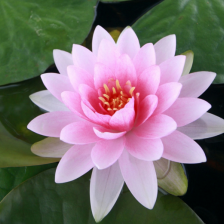}}\hspace{0.02in}
\subfigure[conv1]{\includegraphics[width=0.19\textwidth]{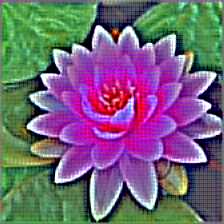}}\hspace{0.02in}
\subfigure[conv2]{\includegraphics[width=0.19\textwidth]{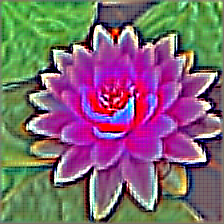}}\hspace{0.02in}
\subfigure[conv3]{\includegraphics[width=0.19\textwidth]{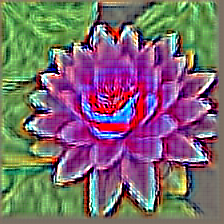}}\hspace{0.02in}
\subfigure[conv4]{\includegraphics[width=0.19\textwidth]{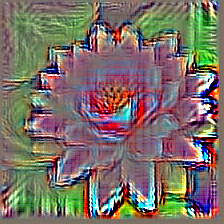}}
\end{figure*}

\end{document}